\documentclass{article}

\usepackage[preprint]{neurips_2026}
 
\usepackage[utf8]{inputenc}
\usepackage[T1]{fontenc}
\usepackage{hyperref}
\usepackage{url}
\usepackage{booktabs}
\usepackage{amsfonts}
\usepackage{amssymb}
\usepackage{amsmath}
\usepackage{nicefrac}
\usepackage{microtype}
\usepackage{graphicx}
\usepackage{xcolor}
\usepackage{algorithm}
\usepackage{algorithmic}
\usepackage[capitalize]{cleveref}
\usepackage{mathtools}
\usepackage{bm}
\usepackage{amsthm}
\usepackage{multirow}
\usepackage{makecell}
\usepackage{array}
\usepackage{colortbl}
\usepackage[all]{nowidow}
\usepackage{rotating}
\usepackage{uoftcolors}
\usepackage{xspace}

\hypersetup{colorlinks, allcolors=uoftsecondaryblue}

\setcitestyle{numbers,square,comma}

\colorlet{bestblue}{uoftblue}
\colorlet{qampurple}{uoftsecondaryblue}
\colorlet{qamrowbg}{uoftsecondaryblue!10}
\definecolor{rangegray}{RGB}{140, 140, 140}

\crefname{appendix}{Appendix}{Appendices}
\Crefname{appendix}{Appendix}{Appendices}

\newcommand{\valarr}[4]{\makecell{{\fontsize{3.5}{4}\selectfont\textcolor{rangegray}{[#2]\hphantom{$\to$}[#4]}}\\\textcolor{rangegray}{#1}\,$\to$\,#3}}
\newcommand{\bestarr}[4]{\makecell{{\fontsize{3.5}{4}\selectfont\textcolor{rangegray}{[#2]\hphantom{$\to$}[#4]}}\\\textcolor{rangegray}{#1}\,$\to$\,\textcolor{bestblue}{\textbf{#3}}}}
\newcommand{\offbestarr}[4]{\makecell{{\fontsize{3.5}{4}\selectfont\textcolor{rangegray}{[#2]\hphantom{$\to$}[#4]}}\\\textcolor{bestblue}{\textbf{#1}}\,$\to$\,#3}}
\newcommand{\bothbestarr}[4]{\makecell{{\fontsize{3.5}{4}\selectfont\textcolor{rangegray}{[#2]\hphantom{$\to$}[#4]}}\\\textcolor{bestblue}{\textbf{#1}}\,$\to$\,\textcolor{bestblue}{\textbf{#3}}}}

\newcommand{\qambestarr}[4]{\makecell{{\fontsize{3.5}{4}\selectfont\textcolor{rangegray}{[#2]\hphantom{$\to$}[#4]}}\\\textcolor{rangegray}{#1}\,$\to$\,\textcolor{qampurple}{\textbf{#3}}}}
\newcommand{\qamoffbestarr}[4]{\makecell{{\fontsize{3.5}{4}\selectfont\textcolor{rangegray}{[#2]\hphantom{$\to$}[#4]}}\\\textcolor{qampurple}{\textbf{#1}}\,$\to$\,#3}}
\newcommand{\qambothbestarr}[4]{\makecell{{\fontsize{3.5}{4}\selectfont\textcolor{rangegray}{[#2]\hphantom{$\to$}[#4]}}\\\textcolor{qampurple}{\textbf{#1}}\,$\to$\,\textcolor{qampurple}{\textbf{#3}}}}

\newtheorem{proposition}{Proposition}

\newcommand{\E}{\mathbb{E}}
\newcommand{\R}{\mathbb{R}}
\newcommand{\cS}{\mathcal{S}}
\newcommand{\cA}{\mathcal{A}}
\newcommand{\cD}{\mathcal{D}}

\newcommand{\qpilots}{\texttt{QPILOTS}\xspace}
\newcommand{\qpilotu}{\texttt{QPILOTS-U}\xspace}
\newcommand{\qpilotm}{\texttt{QPILOTS-M}\xspace}

\title{\texttt{QPILOTS}:\\Efficient Test-Time Q-Steering for Flow Policies}

\author{%
  Yifan Ruan$^{1,2}$\thanks{Corresponding author: \texttt{yifan@cs.toronto.edu}.} \quad
  Chenyang Cao$^{1}$ \quad
  Andreas Burger$^{1,2}$ \\ 
  \bf Ali Pesaranghader$^{3}$ \quad
  Kaveh Kamali$^{3}$ \quad
  Jaehong Kim$^{3}$ \quad
  Nandita Vijaykumar$^{1,2}$ \\
  \bf Alan Aspuru-Guzik$^{1,2}$ \quad
  Igor Gilitschenski$^{1,2}$ \quad
  Nicholas Rhinehart$^{1}$ \\[4pt]
  {\normalfont $^{1}$University of Toronto \quad $^{2}$Vector Institute \quad $^{3}$LG Electronics}
}

\begin{document}

\maketitle

\begin{abstract}
Flow-matching and diffusion policies are expressive action generators, but optimizing them with temporal-difference reinforcement learning (RL) remains difficult. Effective policy extraction requires exploiting the critic's action gradient, yet directly backpropagating this signal through a multi-step denoising process can be numerically unstable.
Existing methods work around this either by discarding gradient information, distilling the policy into a simpler one-step actor, or repeatedly fine-tuning the denoising policy as the critic improves.
We propose \qpilots, a method that leaves the original policy unmodified and steers the denoising process at inference time. At each denoising step, instead of evaluating the critic on the noisy intermediate action where critic predictions are unreliable, we first project that intermediate state to an estimate of the final clean action and compute the critic gradient there. We introduce two variants: \qpilotu uses a fast single-point approximation, while \qpilotm draws differentiable posterior samples via a learned auxiliary network. On a standard offline-to-online RL benchmark, \qpilots{} achieves the best aggregate performance, reaching an average success rate of $90\%$ across 50 tasks. We also apply \qpilots{} to steer a large, frozen, pretrained Vision-Language Action (VLA) foundation model, outperforming or matching prior inference-time approaches across six manipulation tasks in simulation.
\end{abstract}


\begin{figure}[h]
\centering
\includegraphics[width=3.5in]{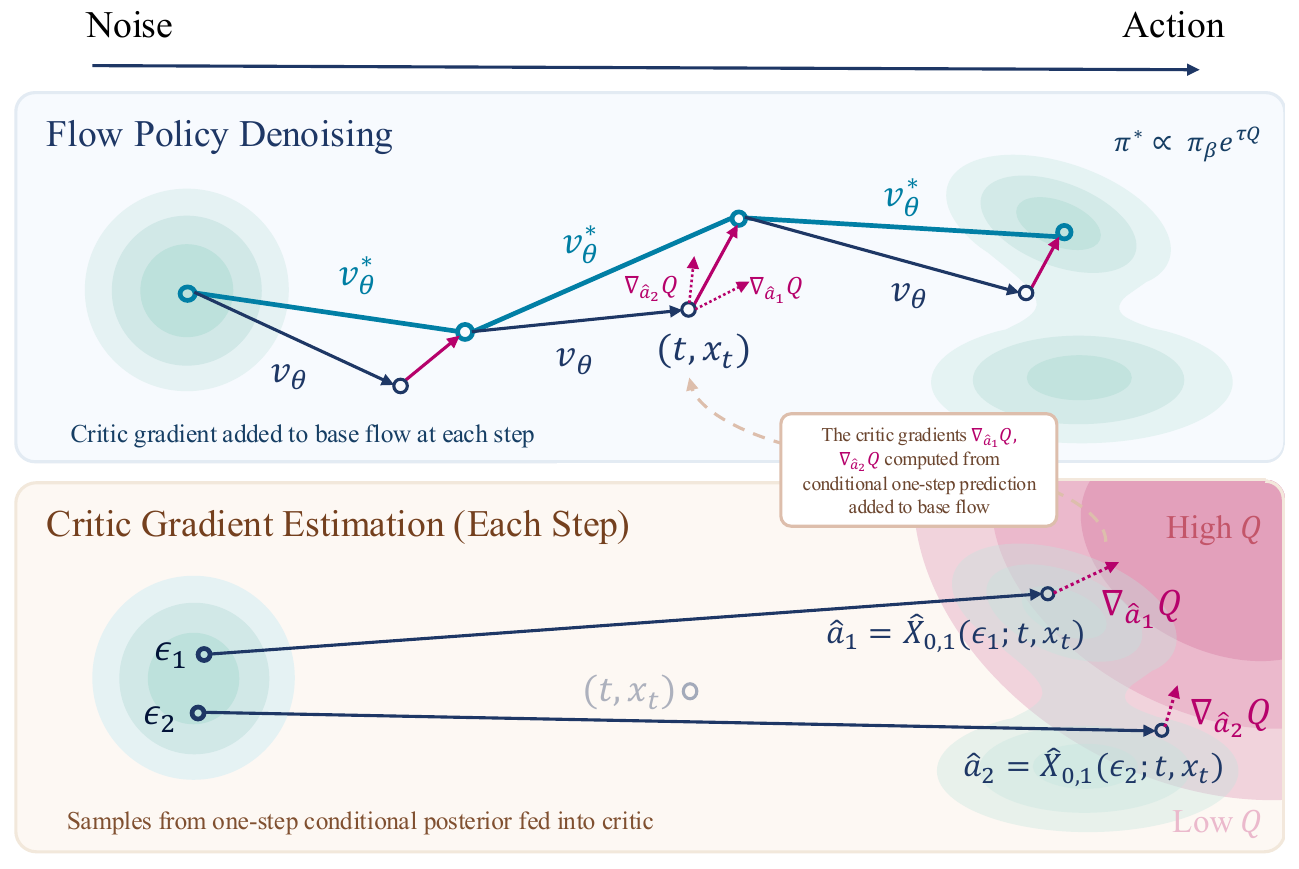}
\includegraphics[width=1.9in]{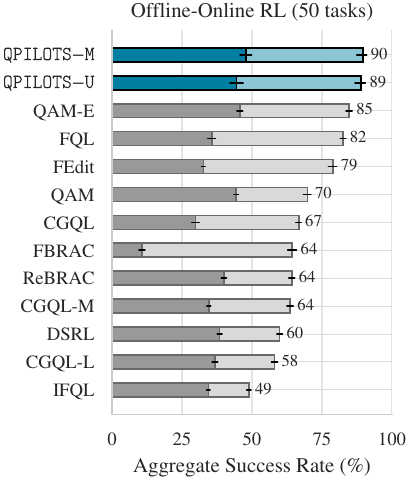}
\caption{\textbf{\texttt{QPILOTS} turns policy extraction into flow-time tilted sampling.}
\textbf{Left:} For a base policy $\pi_\beta$ and target critic $Q$, KL-regularized policy improvement yields the critic-tilted target
$\pi^*(a|s)\propto \pi_\beta(a|s)\exp(\tau Q(s,a))$.
The $Q$ function induces a log-tilt potential whose gradient gives a generation-time drift correction to the base policy's flow estimator $v_\theta$. \qpilots{} efficiently estimates this gradient to steer without updating the base policy.
\textbf{Right:} Aggregated offline-to-online success rates on 50 OGBench tasks \cite{park2025ogbench}.}
\label{fig:banner}
\end{figure}

\section{Introduction}
\label{sec:intro}
Flow-matching and diffusion policies have become the de facto standard way to represent complex, multi-modal action distributions in continuous control and vision-language-action models (VLAs)~\citep{lipman2023flow,black2024pi0,chi2025diffusionpolicy} and there is sustained interest in improving these policies with downstream rewards~\citep{park2025fql,wagenmaker2025dsrl,dong2025expo,li2026qam}. Formally, this can be framed as \textit{behavior-constrained} policy learning, where the agent is equipped with a base flow policy $\pi_\beta(a|s)$ trained on an offline dataset $\cD$ and improves it online with a learned critic, $Q$, without straying too far from the base policy. Recently, ~\citet{park2024bottleneck} has shown that the main challenge is to extract this optimal policy from $(\pi_\beta, Q)$ rather than learning $Q$ itself.
We refer to this mapping from a base policy and critic to an improved action distribution as \emph{policy extraction}. 
An extractor may either modify the policy during training or steer its samples at inference time.

For simple reparameterized policies, the critic's action gradient $\nabla_a Q(s,a)$ can guide policy updates. For flow policies, this same signal must pass through a multi-step generation, and backpropagation through the denoising process is often unstable or expensive~\citep{wang2023dql,ren2024diffusion,mcallister2025flow}. To circumvent this instability, prior work distills the original flow policy into a one-step approximation~\citep{park2025fql, li2026qam} or uses finetuning to target the optimal policy ~\citep{psenka2024qsm, fang2025dac, li2026qam}. In practice, the former sacrifices expressivity, and the latter requires tuning policy parameters, hindering its scalability to large-capacity models due to the prohibitive computational overhead. 
This motivates another class of methods to apply guidance during \emph{inference}, forgoing the need for finetuning. However, in offline-online Reinforcement Learning (RL) benchmarks, they have thus far been outperformed by methods that use finetuning~\citep{li2026qam}, and often have limited expressivity~\citep{dong2025expo}, high cost~\citep{hansenestruch2023idql}, or apply guidance at noisy actions where the critic is not calibrated~\citep{wagenmaker2025dsrl, li2026qam}. 

Addressing these limitations, we introduce \texttt{QPILOTS} (Q-guided Posterior Inference for Learning Off-policy and Test-time Steering). Our main method, \qpilotm, builds on Meta Flow Maps (MFM)~\citep{potaptchik2026mfm} to construct an asymptotically unbiased estimator of clean-action Q-value gradients during flow sampling. This allows a standard off-policy critic to steer the base policy without evaluating the critic on noisy intermediate latents. In turn, \qpilotm{} recovers much of the useful gradient information available to training-time extractors without the instability of backpropagating through the full denoising chain or the expressivity loss of one-step distillation.  

Empirically, we find that \qpilots{} achieves the best aggregate score among competitive training-time ~\citep{wang2023dql, park2025fql, li2026qam} and inference-time baselines~\citep{hansenestruch2023idql, wagenmaker2025dsrl, dong2025expo} on the OGBench~\citep{park2025ogbench} offline/offline-to-online RL benchmark, establishing it as a powerful solution to policy extraction. We further apply \qpilots to steer a state-of-the-art pretrained generalist $\pi_{0.5}$~\citep{black2025pi05} without modifying the base flow, demonstrating that \qpilots successfully learns from online interactions even with a behavior Q-value, surpassing or matching other inference-time approaches on $6$ tasks from the LIBERO-90 suite~\citep{liu2023libero}. Taken together, our contribution is an efficient inference-time steering recipe that applies RL to flow policies by leveraging clean-space Q-values to guide the intermediate generation process. 

\section{Related Work}
\label{sec:related}

\paragraph{Offline-to-Online Reinforcement Learning.}
\label{sec:related:rl}
Offline-to-online RL methods leverage offline datasets to pre-train policies and value functions, then fine-tune them through online interaction to improve sample efficiency~\citep{xie2021policy,lee2022offline,song2022hybrid,nakamoto2023calql,tarasov2023rebrac,zhou2024wsrl,li2025three}.
A common recipe uses the same offline RL objective during both phases, with the offline dataset loaded into a replay buffer alongside newly collected transitions~\citep{ball2023rlpd}.
While purely online methods can treat the offline data as additional off-policy experience, offline pre-training tends to produce stronger results on challenging sparse-reward tasks~\citep{li2025qc,li2026qam}. We adopt this training setup for the critic and focus on the complementary challenge of \textit{policy extraction}, that is selecting high-value actions with the policy while remaining close to the behavior support~\citep{fujimoto2021td3bc,kostrikov2022iql,nakamoto2023calql}. This emphasis follows the observation that policy extraction and test-time generalization can be the limiting factors in offline RL, even when the value function is well-trained~\citep{park2024bottleneck}. It also affords our approach inherent deployment flexibility, allowing the same inference-time extractor to be used offline, online, or during fine-tuning.

\paragraph{RL with Diffusion and Flow Policies.}
\label{sec:related:diffusion_flow_rl}
Diffusion and flow-matching policies have become a dominant policy class in both imitation learning and RL due to their ability to represent complex, multi-modal action distributions~\citep{janner2022planning,ajay2022conditional,kang2023efficient,he2023diffcps,lu2023contrastive,hansenestruch2023idql,ding2024consistency,chen2024diffusion,ren2024diffusion,chi2025diffusionpolicy}.
A central challenge is optimizing these policies against a critic $Q(s, a)$ when the generation process involves a multi-step denoising chain. 
Unlike prior work that categorizes approaches by how the value function is used~\citep{li2026qam}, we classify them according to how the base policy is treated during training and inference.
Under this lens, existing approaches can be categorized into three strategies.

\textit{1. Back-Propagating Through Time (BPTT)} offers a straightforward way to use value signals by differentiating the critic through the full denoising chain to maximize Q~\citep{wang2023dql,he2023diffcps,zhang2024entropy}. 
Related policy-gradient formulations instead make likelihoods or advantage ratios tractable for flow policies and updating the policy itself~\citep{mcallister2025flow,zhang2025reinflow}. While conceptually simple, computing gradients through the long unrolled chain of many-step flows is highly prone to numerical instability~\citep{park2025fql}.

\textit{2. Policy Fine-Tuning without BPTT} avoids full-chain gradients by changing the policy objective. Some methods distill the multi-step flow into a simplified one-step policy, improving stability but giving up part of the original model's iterative expressivity~\citep{ding2024consistency,park2025fql,espinosadice2025shortcut, wang2026meanflowql}.
Other methods construct a step-wise training objective that uses critic information at intermediate denoising states. Methods like QAM~\citep{li2026qam}, alongside other gradient-approximated or value-weighted variants~\citep{psenka2024qsm,fang2025dac,frans2025cfgrl}, construct specialized losses where the critic gradient only passes through the base policy at isolated denoising steps.
While these avoid the instability of backprop, they couple the base flow and the critic through a specialized loss and require re-training the policy whenever the critic changes.

\textit{3. Inference-time steering} modifies the action-generation process at test time without impacting training.
DSRL~\citep{singh2021parrot,wagenmaker2025dsrl} trains a noise-space RL policy that perturbs the input latents of a frozen flow.
Residual and edit policies~\citep{johannink2019residual,ankile2025imitation,dong2025expo,yuan2025decorator} train a separate corrector that acts on completed actions.
Best-of-$N$~\citep{hansenestruch2023idql,nakamoto2024vgps,mark2025policyagnostic,attarian2026ufops} samples $N$ candidates from the base policy and picks the highest-Q one, scaling poorly with action dimension.
All of these consult the critic either after a full action has been produced or via a separately trained auxiliary policy.
Our method instead injects a noiseless-action Q-gradient into the Euler integrator at \emph{every} denoising step, guiding the trajectory mid-integration without training an edit policy or modifying the base flow weights.

\paragraph{Flow Steering and Guided Generation.}
\label{sec:related:flow_steering}
Classifier guidance~\citep{dhariwal2021classifier,ho2022classifier} steers generation by adding reward gradients to the diffusion drift. Evaluating the reward at noisy states, however, yields a biased estimator of the tilted target $p_\theta \propto p_\beta \, e^{r}$, since rewards are typically calibrated only on terminal data~\citep{bansal2024universal}. Universal Guidance~\citep{bansal2024universal, attarian2026ufops} mitigates this by evaluating the reward at the Tweedie denoised estimate $\hat x_1$, but incurs posterior-mean bias when the conditional $p_{1|t}(\cdot \mid x_t)$ is multimodal. Thus existing guidance directions differ in whether they use the noisy-space gradient $\nabla Q(s,x_t)$, a denoised-point gradient $\nabla Q(s,\hat x_1)$, or the true posterior tilt gradient $\nabla_{x_t}\log \E_{x_1\sim p_{1|t}} e^{\tau Q(s,x_1)}$. Particle methods such as SMC samplers~\citep{wu2023tds}, DynaGuide~\citep{du2025dynaguide} reduce related guidance bias through resampling, noised-sample classifiers, or verification, but add particles, auxiliary training, or sample-selection cost. Meta Flow Maps~\citep{potaptchik2026mfm} and related stochastic flow-map models~\citep{holderrieth2026diamond} address the challenge of posterior-sampling by producing differentiable samples from $p_{1|t}$, enabling Monte Carlo estimators of the log-tilt-potential gradient. We adapt this line of work to RL by using a learned Q-function as the tilt scalar and by applying the resulting noiseless-action posterior gradient during policy sampling.

\section{Preliminaries and Background.}
\label{sec:prelim}

\paragraph{Reinforcement Learning.}
\label{sec:prelim:rl}

We consider a Markov Decision Process $\mathcal{M} = (\cS, \cA, P, \gamma, R, \mu)$, where $\cS$ is the state space, $\cA = \R^A$ is a continuous action space, $\Delta \mathcal{X}$ denotes the set of probability distributions over a set $\mathcal{X}$, $P : \cS \times \cA \to \Delta \cS$ is the transition function, $\gamma \in [0,1)$ is the discount factor, $R : \cS \times \cA \to \R$ is the reward function, and $\mu \in \Delta \cS$ is the initial state distribution. 
We assume that we have access to an offline dataset $\cD = \{(s_i, a_i, s_i', r_i)\}_{i=1}^{|\cD|}$ of transitions collected by a behavior policy $\pi_\beta$.
The goal of offline RL is to learn a policy $\pi_\theta : \cS \to \Delta \cA$ from $\cD$ that maximizes the expected discounted return
\begin{equation}
  \E_{s_0 \sim \mu,\, a_k \sim \pi(\cdot | s_k),\, s_{k+1} \sim P(\cdot | s_k, a_k)} \left[ \sum_{k=0}^{\infty} \gamma^k R(s_k, a_k) \right].
  \label{eq:return}
\end{equation}

\paragraph{Flow Matching.}
\label{sec:prelim:flow}

A flow-matching generative model~\citep{lipman2023flow,liu2023rectifiedflow} learns a time-dependent velocity field $v_\theta : \R^d \times [0,1] \to \R^d$ that transports a, typically Gaussian, noise distribution $p_0 = \mathcal{N}(0, I)$ to a data distribution $p_1$ via an ordinary differential equation (ODE)
\begin{equation}
  \dot{x}_t = v_\theta(x_t, t), \qquad x_0 \sim p_0.
  \label{eq:flow_ode}
\end{equation}
Under the linear interpolant $x_t = (1-t)\,x_0 + t\,x_1$, the velocity field is trained with
\begin{equation}
  \mathcal{L}_{\mathrm{FM}}(\theta) = \E_{t \sim \mathcal{U}[0,1],\, x_0 \sim p_0,\, x_1 \sim p_1} \left[ \| v_\theta(x_t, t) - (x_1 - x_0) \|^2 \right],
  \label{eq:fm_loss}
\end{equation}
known as the \emph{conditional flow matching objective}.
At inference, samples from $p_1$ are generated by sampling noise from the base distribution $x_0\sim p_0$ and integrating the ODE via $K$ Euler steps $x_{i+1} = x_i + h \cdot v_\theta(x_i, t_i)$, where $h = 1/K$, $t_i = i h$.
To represent a state-conditioned policy $\pi_\theta(\cdot \mid s)$ in the RL setting, we condition the velocity field on the current state $s$, writing $v_\theta(x_t, t, s)$, and treat the integrated endpoint $x_1$ as the agent's action.

\paragraph{Stochastic Formulation.}
The deterministic ODE maps each initial noise $x_0$ to a unique endpoint $x_1$, leaving no room to ``branch'' toward higher-reward outcomes once an intermediate state $x_t$ has been produced. Replacing the ODE with a matching stochastic differential equation (SDE) gives a family of paths through any intermediate state, characterized by a conditional distribution $p_{1|t}(x_1 \mid x_t)$ over endpoints reachable from $x_t$. 
For any non-negative diffusion schedule $\{\sigma_t\}_{t \in [0,1]}$ and Brownian motion $B_t$, the stochastic differential equation (SDE)
\begin{equation}
  dX_t = \left[ v_\theta(X_t, t) + \frac{\sigma_t^2}{2} \nabla \log p_t(X_t) \right] dt + \sigma_t \, dB_t, \qquad X_0 \sim p_0,
  \label{eq:flow_sde}
\end{equation}

produces the same marginal density $p_t$ as the ODE at every flow-time $t$~\citep{song2021scorebased}. 
We use the common choice  $\sigma_t^2 = 2(1-t)/t$, that is associated with the linear interpolant \citep{lipman2023flow}.

\paragraph{Exponentially Tilted Generative Sampling.}
\label{sec:prelim:tilting}
We next review a generic recipe for steering a generative model toward high-scoring samples. This view is common in reward-guided generation, classifier guidance, and control-as-inference~\citep{song2021scorebased,potaptchik2026mfm,mihatsch2002risk}.
Let $p_1$ denote the terminal distribution produced by a base generative model, and
$\psi : \R^d \to \R$ be a scalar utility assigned to a completed sample.
\emph{Exponentially tilted sampling} seeks to sample from
\begin{equation}
  p_{\psi}(x) \propto p_1(x)\, e^{\psi(x)}
  \label{eq:tilted_distribution}
\end{equation}
Larger $\psi(x)$ raises the probability of $x$ under the tilted target. In our setting, the completed $x_1$ is an \emph{action} and $\psi$ will be a critic-induced utility defined in \Cref{sec:method:framework}.
To realize this target with a stochastic process, define the \emph{log-tilt potential}
\begin{equation}
  V_t(x_t) := \log \E_{x_1 \sim p_{1|t}(x_1 \mid x_t)}\!\left[ e^{\psi(x_1)} \right],
  \label{eq:tilt_potential}
\end{equation}
which measures how much high-score terminal probability is reachable from the intermediate state $x_t$. (Despite the symbol, $V_t$ is \emph{not} the RL value function of \Cref{sec:prelim:rl}.) Adding a drift correction proportional to $\nabla V_t$ to the base SDE yields the \emph{steered probability-flow ODE}~\citep{potaptchik2026mfm}
\begin{equation}
  \dot{x}^\star_t = v_\theta(x^\star_t, t) + \tfrac{\sigma_t^2}{2}\, \nabla V_t(x^\star_t), \qquad x_0^\star \sim p_0,
  \label{eq:steered_ode}
\end{equation}
whose endpoint distribution at $t=1$ is exactly $p_{\psi}$. Our method will instantiate $v_\theta$ with a base flow policy's velocity network and $\psi$ with a learned action-value function, so that the steered ODE generates Q-tilted actions. The challenge is estimating $\nabla V_t$, since $p_{1|t}(x_1 \mid x_t)$ is generally not available in closed form. The next section describes the construction we use to overcome this.

\paragraph{Meta Flow Maps.}
\label{sec:prelim:mfm}
Drawing exactly from $p_{1|t}(\cdot \mid x_t)$ would normally require an inner ODE solve starting from $x_t$, which is prohibitively expensive inside the outer Euler loop. A Meta Flow Map (MFM)~\citep{potaptchik2026mfm} amortizes this posterior sampling problem over contexts $(t,x)$. For each context, it defines an auxiliary probability-flow ODE with drift $\bar{v}_u(\cdot,t,x)$ that transports exogenous noise $\epsilon \sim p_0$ to the endpoint posterior $p_{1|t}(\cdot \mid x)$.
Here $u, w \in [0,1]$ are auxiliary MFM times, distinct from the outer flow time $t$ and the RL state $s$. The MFM learns the solution operators of this family with a single amortized network $\hat{X}_{u,w}(\bar{x},t,x)$, parameterized in residual form
\begin{equation}
  \hat{X}_{u,w}(\bar{x}; t, x) = \bar{x} + (w-u)\,\hat{v}_{u,w}(\bar{x}; t, x),
  \label{eq:mfm_residual}
\end{equation}
where $\hat{v}_{u,w}$ predicts the average velocity of the auxiliary trajectory between times $u$ and $w$.
Setting $u=0$ and $w=1$ gives the one-step posterior sampler
\[
  \hat{X}_{0,1}(\epsilon,t,x) = \epsilon + \hat{v}_{0,1}(\epsilon;t,x),
  \qquad \epsilon \sim p_0.
\]
In the state-conditioned policy setting, we condition this sampler on the RL state and use $\hat{X}_{0,1}(\epsilon,t,x,s)$ as an approximate sample from $p_{1|t}(\cdot \mid x,s)$.

Training combines two losses. The \emph{diagonal loss} anchors the instantaneous velocity $\hat{v}_{u,u}$ to the conditional flow matching target:
\begin{equation}
  \mathcal{L}_{\mathrm{diag}} = \E_{t, u \sim \mathcal{U}[0,1]} \left[ \left\| \hat{v}_{u,u}(\bar{I}_u; t, I_t) - (\bar{I}_1 - \bar{I}_0) \right\|^2 \right],
  \label{eq:diag_loss}
\end{equation}
where $I_t = (1-t)I_0 + t I_1$ and $\bar{I}_u = (1-u)\bar{I}_0 + u \bar{I}_1$ are coupled interpolants sharing the data endpoint $I_1 = \bar{I}_1 \sim p_1$, with $I_0, \bar{I}_0 \sim p_0$ drawn independently.
The \emph{consistency loss} enforces the semigroup property
$\hat{X}_{u,w} = \hat{X}_{m,w} \circ \hat{X}_{u,m}$ for $u < m < w$:
\begin{equation}
  \mathcal{L}_{\mathrm{cons}}
  =
  \E_{t,\,x,\,u<m<w,\,\bar{x}}
  \left[
    \left\|
      \hat{X}_{u,w}(\bar{x}; t, x)
      -
      \hat{X}_{m,w}\!\left(
        \hat{X}_{u,m}(\bar{x}; t, x); t, x
      \right)
    \right\|^2
  \right].
  \label{eq:consistency_loss}
\end{equation}

\paragraph{Efficient Gradient Estimation.}
With a trained MFM, the log-tilt-potential gradient can be estimated via the
\emph{MFM-G estimator}~\citep{potaptchik2026mfm}:
\begin{equation}
  \widehat{\nabla_{x_t} V_t}(x_t)
  =
  \nabla_{x_t}
  \log
  \frac{1}{N}
  \sum_{n=1}^{N}
  \exp\!\left(
    \psi\!\left(
      \hat{X}_{0,1}(\epsilon_n;\, t,\, x_t)
    \right)
  \right),
  \qquad
  \epsilon_n \sim p_0,
  \label{eq:mfm_g}
\end{equation}
where $\hat{X}_{0,1}(\epsilon_n;\, t,\, x_t)$ is an MFM sample from the
conditional endpoint distribution $p_{1|t}(x_1 \mid x_t)$.
Differentiation through $\hat{X}_{0,1}$ with respect to $x_t$ is tractable
because $\hat X$ represents a single feedforward neural network evaluation.
With an exact posterior sampler, as $N \to \infty$ the average converges to the posterior expectation in \cref{eq:tilt_potential}, so the MFM-G estimator converges to
$\nabla_{x_t} V_t(x_t)$ under the assumptions of \citet{potaptchik2026mfm}.
Thus, MFM-G provides differentiable estimates of the steering direction without inner ODE integration.

\section{\texttt{QPILOTS}: Q-Guided Posterior Inference for Learning Off-Policy and Test-Time Steering}
\label{sec:method}
We now present our approach for steering a base flow policy with a learned critic. Rather than coupling policy and critic through specialized training-time losses, \qpilots{} treats the base flow $\pi_\beta$ and critic as independent components, each obtained via their own standard objectives, and combines them only at action-generation time. The base flow can be trained from scratch alongside the critic or taken off-the-shelf as a frozen pretrained policy; either way, we add a Q-gradient steering term to the flow's Euler integrator at inference. Because the base policy is never modified by the critic, any flow-matching policy can be plugged in unchanged, including pretrained policies learned on different data or tasks.
We present two instantiations of \qpilots{} that trade off bias and computational cost. \qpilotu{} is a Universal-Guidance variant that uses a Tweedie-denoised point estimate, while \qpilotm{} uses Meta Flow Maps to estimate the posterior-tilt gradient from differentiable posterior samples. \cref{alg:mfm-steer} summarizes the shared inference-time procedure underlying both variants.

\paragraph{Inference-Time Q-Steering.}
\label{sec:method:framework}
Following standard relative-entropy policy search~\citep{peters2010reps,peng2019awr,abdolmaleki2018mpo,park2025fql,li2026qam}, we cast critic-guided policy extraction as a KL-regularized improvement step. The goal is to maximize $\E_{a\sim\pi(\cdot|s)}[Q(s,a)]$ subject to $D_{\mathrm{KL}}(\pi(\cdot|s)\,\|\,\pi_\beta(\cdot|s)) \leq \delta(s)$. The closed-form solution is the exponentially tilted policy $\pi^\star(a|s) \propto \pi_\beta(a|s)\exp(\tau(s) Q(s,a))$ for an appropriate Lagrange multiplier $\tau(s) > 0$. To curb overestimation we use a pessimistic ensemble critic
\begin{equation}
  \bar{Q}(s, a) := \tfrac{1}{J}\textstyle\sum_{j=1}^{J} Q_{\phi_j}(s, a) - \rho\,\mathrm{std}_{j}\!\left(Q_{\phi_j}(s, a)\right),
  \label{eq:pessimistic_critic}
\end{equation}
with ensemble size $J$ and pessimism weight $\rho$ matching \citet{li2026qam}. Substituting $\bar Q$ for $Q$ above yields the objective $\pi^\star(a|s) \propto \pi_\beta(a|s)\exp(\tau\bar Q(s,a))$, which is exactly the tilted target of \cref{sec:prelim:tilting} with utility $\psi_s(a) := \tau\bar Q(s,a)$. Plugging the state-conditioned base velocity $v_\theta(\cdot, t, s)$ into \cref{eq:steered_ode} as the drift, the controlled probability-flow ODE has marginal $\pi^\star(\cdot|s)$ at $t=1$. The state-conditioned log-tilt potential
\begin{equation}
  V_t(x_t;s) := \log \E_{p_{1|t}(x_1 \mid x_t,s)} \exp\!\left(\psi_s(x_1)\right)
  \label{eq:q_steering_value}
\end{equation}
governs the steering term $\tfrac{\sigma_t^2}{2}\nabla_{x_t} V_t(x_t;s)$. The remaining task now is estimating this gradient.

\begin{algorithm}[t]
\caption{\qpilots: Inference-Time Q-Steering for Flow Policies}
\label{alg:mfm-steer}
\begin{algorithmic}[1]
\REQUIRE Observation $s$, base velocity field $v_\theta$, critic ensemble $\{Q_{\phi_j}\}_{j=1}^{J}$, log-tilt-potential gradient estimator $\textsc{GradV}$, steering coefficient $\alpha$, flow steps $K$
\STATE $h \leftarrow 1/K$;\quad $x_0 \sim \mathcal{N}(0, I)$
\FOR{$i = 0, \ldots, K-1$}
    \STATE $t \leftarrow i / K$;\quad $\hat{v} \leftarrow v_\theta(x_i, t, s)$ \COMMENT{Base velocity}
    \IF{$i > 0$}
        \STATE $\widehat{\nabla V}_t \leftarrow \textsc{GradV}(x_i, t, s)$ \COMMENT{\qpilotu: \cref{eq:ug_grad};\ \ \qpilotm: \cref{eq:q_steering_grad}}
        \STATE $g \leftarrow \frac{\|\hat{v}\|}{\|\widehat{\nabla V}_t\| + \varepsilon}\, \widehat{\nabla V}_t$;\quad $\hat{v} \leftarrow \hat{v} + \alpha \cdot g$ \COMMENT{Rescale gradient and compute steering direction}
    \ENDIF
    \STATE $x_{i+1} \leftarrow x_i + h \cdot \hat{v}$ \COMMENT{Steer intermediate action}
\ENDFOR
\STATE \textbf{return} $a =\operatorname{clip}(x_K,\, {-1},\, 1)$ \COMMENT{Final action}
\end{algorithmic}
\end{algorithm}

\paragraph{\qpilotu: Universal Guidance with the Tweedie Estimate.}
\label{sec:method:ug}
The simplest estimator replaces the intractable expectation in
\cref{eq:q_steering_value} with a point estimate at the posterior mean.
For the linear-interpolant flow, Tweedie's identity gives the minimum mean squared error (MMSE) denoiser~\citep{bansal2024universal}
\begin{equation}
\E[X_1 \mid X_t=x_t,s] \approx x_t + (1-t)\,v_\theta(x_t, t, s) =: \hat{x}_1
\end{equation}
Approximating $V_t(x_t;s) \approx \psi_s(\hat{x}_1)=\tau\bar Q(s,\hat{x}_1)$
yields our \emph{Universal Guidance} estimator:
\begin{equation}
  \widehat{\nabla V}_t^{\mathrm{UG}}(x_t) = \tau\, \nabla_{x_t}\, \bar{Q}\!\left(s,\, \mathrm{clip}\!\left[x_t + (1-t)\, v_\theta(x_t, t, s),\, -1,\, 1\right]\right),
  \label{eq:ug_grad}
\end{equation}
computed by a single additional forward-backward pass through the base velocity field and critic. The clip is applied straight-through during the backward pass.
This adapts the Tweedie-denoised classifier-guidance construction of \citet{bansal2024universal} to the flow-matching setting, using a learned Q-function as the guidance signal. \citet{attarian2026ufops} previously applied the same construction to diffusion BC policies, with a contrastive success classifier or a time-to-success estimator playing the role of the guidance signal, instead of an off-policy RL critic. While simple, the estimator introduces bias by (i) replacing the posterior log-expectation in \cref{eq:q_steering_value} with a point estimate at the posterior mean (ii) the denoised endpoint estimate is exact only when the learned velocity equals the true conditional mean velocity. In exchange, it adds zero training time beyond the base flow and the critic.

\paragraph{\qpilotm: Estimating $\nabla V_t$ with Meta Flow Maps.}
\label{sec:method:mfm}
To potentially improve on \qpilotu, we instantiate the MFM-G estimator in \cref{eq:mfm_g} with $\psi(\cdot) = \tau \bar{Q}(s, \cdot)$ and condition the MFM on the state $s$:
\begin{equation}
  \widehat{\nabla V}_t^{\mathrm{MFM}}(x_t) = \nabla_{x_t} \log \frac{1}{N} \sum_{n=1}^{N} \exp\!\left( \tau\, \bar{Q}\!\left(s,\, \hat{X}_{0,1}(\epsilon_n;\, t,\, x_t,\, s) \right) \right), \qquad \epsilon_n \stackrel{\mathrm{iid}}{\sim} \mathcal{N}(0, I).
  \label{eq:q_steering_grad}
\end{equation}
Compared with \qpilotu, this replaces the deterministic point estimate $\hat{x}_1$ with a Monte Carlo log-sum-exp over posterior samples.
If the MFM samples from the true endpoint posterior, the estimator is consistent for the posterior-tilt gradient as $N \to \infty$, at the cost of $N$ forward passes through the MFM and critic per Euler step, plus one backward pass.

\paragraph{Gradient Rescaling.} The theoretical drift $\tfrac{\sigma_t^2}{2}\nabla V_t$ uses $\sigma_t^2 = 2(1-t)/t$, which diverges as $t\to 0$ and vanishes as $t\to 1$. 
As is common in implementations with endpoint-singular schedules \citep{song2021maximum}, we do not apply steering at $t=0$, where $\nabla V_0$ is uninformative since $p_{1|0}$ equals the unconditional base-policy action distribution and therefore provides no state-dependent steering signal. For $t>0$ we follow \citet{potaptchik2026mfm} in rescaling the gradient to match the base-drift magnitude
\begin{equation}
  v_\theta^{\mathrm{steered}}(x_t, t, s) = v_\theta(x_t, t, s) + \alpha\,g_t,\qquad g_t = \frac{\|v_\theta(x_t, t, s)\|}{\|\widehat{\nabla V}_t(x_t)\| + \varepsilon}\,\widehat{\nabla V}_t(x_t),
  \label{eq:grad_rescale}
\end{equation}
with $\varepsilon > 0$ a small numerical-stability constant. $v_\theta^{\mathrm{steered}}$ is the practical, magnitude-rescaled surrogate for the theoretical optimal $v_\theta^\star = v_\theta + (\sigma_t^2/2)\nabla V_t$ (analyzed in \cref{sec:method:theory}); both target $\pi^\star(\cdot|s)$. In practice we fix $\tau = 1$.

\paragraph{Training Settings.}
\label{sec:method:training}
\cref{alg:mfm-steer} treats the base flow trainer and the critic learner as black boxes. We instantiate it in two regimes. \emph{(i) From scratch (OGBench):} $v_\theta$, the critic ensemble, and (for \qpilotm) the MFM are trained jointly on offline data and continue training on offline+online replay, using the standard conditional flow-matching loss (\cref{eq:fm_loss}), the MFM loss (\cref{eq:diag_loss} and \cref{eq:consistency_loss}), and TD learning against a pessimistic target ensemble with next-actions drawn from the steered policy~\citep{li2026qam}. \emph{(ii) Frozen base policy (LIBERO):} we keep the public $\pi_{0.5}$-LIBERO checkpoint~\citep{black2025pi05} frozen and learn only a task-specific critic online. \qpilotu queries $v_\theta$ directly from the frozen base flow. \qpilotm additionally trains an MFM on LIBERO-90 demonstration data via the GLASS reparameterization of \citet{potaptchik2026mfm}, with the frozen base flow as the conditional teacher. Unlike the original MFM setting, where the auxiliary network is trained on the same dataset that produced the base flow, we have access only to LIBERO-90 (a small subset of $\pi_{0.5}$'s pretraining mixture, which is not public); the consequences of this mismatch are discussed in \cref{sec:exp:libero}. Full procedures are in \cref{app:implementation}.

\paragraph{Theoretical Guarantees.}
\label{sec:method:theory}
\qpilotm inherits the convergence guarantees of \citet{potaptchik2026mfm} in the RL setting for the idealized sampler that uses exact posterior samples and the unrescaled $\sigma_t^2/2$ drift. Under standard regularity on $v_\theta$, $\bar Q$, the exact posterior sampler, and $\sigma_t$ (full assumptions and proof in Appendix~\ref{app:proofs}), the distribution $\hat{p}_1^s$ produced by the the idealized version of \cref{alg:mfm-steer} with $K$ Euler steps and $N$ MFM posterior samples per step satisfies
\begin{equation*}
  W_2\!\left(\hat{p}_1^s,\, \pi^\star(\cdot | s)\right) \leq C\!\left(\frac{1}{\sqrt{K}} + \frac{1}{\sqrt{N}}\right),
  \qquad
  \mathrm{KL}\!\left(\hat{p}_1^s \,\|\, \pi^\star(\cdot | s)\right) \leq C\!\left(\frac{1}{K} + \frac{1}{N}\right),
\end{equation*}
for some $C > 0$ independent of $K$ and $N$. The two error sources, Euler-Maruyama time discretization and Monte Carlo gradient estimation, are controlled independently. \qpilotu, which replaces the log-expectation in $V_t$ with the Tweedie posterior-mean estimate, is Bayes-optimal only when $\pi_\beta$ is Gaussian; we do not claim a formal guarantee for it.

\section{Experiments}
\label{sec:experiments}

\begin{table}[t]
\centering
\setlength{\tabcolsep}{2pt}
\renewcommand{\arraystretch}{1.6}
\scriptsize
\begin{tabular}{lccccccccccc}
\toprule
 & \texttt{al} & \texttt{ag} & \texttt{hm} & \texttt{hl} & \texttt{scene} & \texttt{p33} & \texttt{p44} & \texttt{c2} & \texttt{c3} & \texttt{c4} & \texttt{all} \\
 & {\tiny 5 tasks} & {\tiny 5 tasks} & {\tiny 5 tasks} & {\tiny 5 tasks} & {\tiny 5 tasks} & {\tiny 5 tasks} & {\tiny 5 tasks} & {\tiny 5 tasks} & {\tiny 5 tasks} & {\tiny 5 tasks} & {\tiny 50 tasks} \\
\multicolumn{12}{@{}l@{}}{\makebox[\linewidth][l]{\textsc{\textbf{Gaussian}}~\hrulefill}} \\
ReBRAC & \offbestarr{94}{94,95}{98}{97,98} & \offbestarr{57}{53,60}{75}{70,79} & \valarr{69}{65,74}{58}{52,64} & \valarr{17}{15,20}{23}{19,27} & \valarr{65}{57,73}{99}{98,99} & \bestarr{79}{72,85}{100}{100,100} & \valarr{0}{0,0}{0}{0,0} & \valarr{9}{8,10}{97}{96,98} & \valarr{1}{0,1}{14}{9,19} & \bestarr{9}{6,11}{80}{79,80} & \valarr{40}{39,41}{64}{63,65} \\
\multicolumn{12}{@{}l@{}}{\makebox[\linewidth][l]{\textsc{\textbf{Backprop}}~\hrulefill}} \\
FBRAC & \valarr{2}{1,4}{89}{82,95} & \valarr{0}{0,0}{81}{71,88} & \valarr{39}{37,41}{49}{45,53} & \valarr{0}{0,0}{4}{3,4} & \valarr{50}{41,60}{99}{97,100} & \valarr{0}{0,1}{95}{89,100} & \valarr{15}{11,20}{92}{85,98} & \valarr{0}{0,0}{79}{76,83} & \valarr{0}{0,1}{5}{1,9} & \valarr{0}{0,0}{51}{46,57} & \valarr{11}{10,12}{64}{63,66} \\
\multicolumn{12}{@{}l@{}}{\makebox[\linewidth][l]{\textsc{\textbf{Fine-tuning}}~\hrulefill}} \\
FQL & \bestarr{76}{72,79}{99}{99,99} & \valarr{0}{0,0}{93}{87,97} & \valarr{68}{63,73}{87}{83,91} & \valarr{9}{7,11}{16}{14,17} & \valarr{79}{72,85}{91}{87,96} & \bestarr{70}{59,80}{100}{100,100} & \bestarr{5}{3,8}{100}{100,100} & \bestarr{46}{43,49}{100}{100,100} & \valarr{3}{2,4}{60}{54,64} & \bestarr{2}{1,4}{80}{79,80} & \valarr{36}{34,37}{82}{82,83} \\
QAM & \valarr{81}{78,84}{98}{97,98} & \valarr{18}{14,22}{53}{52,55} & \valarr{67}{64,69}{83}{78,87} & \valarr{11}{9,14}{21}{17,24} & \bestarr{97}{95,98}{100}{100,100} & \bestarr{99}{99,100}{100}{100,100} & \valarr{0}{0,0}{3}{0,8} & \bestarr{64}{62,66}{100}{100,100} & \valarr{3}{3,4}{70}{63,76} & \valarr{3}{2,4}{71}{67,74} & \valarr{44}{44,45}{70}{68,71} \\
QAM-E & \valarr{83}{80,86}{96}{95,97} & \bestarr{1}{0,3}{95}{91,97} & \valarr{59}{54,63}{83}{75,90} & \valarr{2}{1,3}{16}{10,23} & \bestarr{97}{96,98}{100}{100,100} & \bothbestarr{100}{100,100}{100}{100,100} & \offbestarr{39}{31,48}{99}{97,100} & \bestarr{65}{63,68}{100}{100,100} & \bestarr{5}{4,6}{79}{79,80} & \valarr{6}{4,9}{79}{79,80} & \valarr{46}{45,47}{85}{84,86} \\
\multicolumn{12}{@{}l@{}}{\makebox[\linewidth][l]{\textsc{\textbf{Steering}}~\hrulefill}} \\
DSRL & \valarr{61}{56,66}{90}{88,91} & \valarr{3}{1,4}{40}{38,43} & \valarr{53}{48,57}{77}{70,82} & \valarr{3}{2,5}{29}{20,39} & \bothbestarr{99}{99,100}{100}{100,100} & \valarr{87}{78,95}{87}{77,95} & \valarr{0}{0,0}{0}{0,0} & \valarr{74}{72,76}{99}{98,99} & \valarr{1}{1,2}{0}{0,0} & \valarr{2}{2,3}{78}{77,78} & \valarr{38}{38,39}{60}{59,61} \\
IFQL & \valarr{36}{32,39}{76}{72,80} & \valarr{1}{0,2}{17}{16,18} & \offbestarr{86}{85,87}{51}{45,58} & \offbestarr{24}{21,27}{7}{6,9} & \valarr{84}{80,88}{96}{95,98} & \bothbestarr{100}{100,100}{100}{99,100} & \valarr{0}{0,0}{0}{0,0} & \valarr{11}{10,12}{79}{79,80} & \valarr{0}{0,0}{0}{0,0} & \valarr{2}{1,3}{61}{59,63} & \valarr{34}{34,35}{49}{48,50} \\
FEdit & \valarr{58}{53,62}{96}{95,96} & \valarr{2}{1,3}{86}{83,90} & \valarr{22}{20,23}{80}{69,90} & \valarr{3}{2,3}{8}{5,11} & \valarr{62}{54,70}{99}{98,99} & \bestarr{99}{98,100}{100}{100,100} & \bestarr{34}{26,41}{100}{100,100} & \bestarr{40}{37,43}{100}{100,100} & \valarr{2}{2,3}{40}{33,46} & \bestarr{5}{3,7}{80}{79,80} & \valarr{33}{32,33}{79}{77,80} \\
CGQL & \valarr{76}{73,80}{94}{94,95} & \valarr{0}{0,2}{85}{82,88} & \valarr{60}{57,62}{71}{69,73} & \valarr{5}{4,5}{5}{4,6} & \valarr{38}{28,49}{85}{80,89} & \bestarr{48}{40,55}{100}{100,100} & \valarr{24}{16,34}{95}{88,100} & \valarr{38}{36,42}{97}{95,98} & \offbestarr{8}{7,9}{20}{20,20} & \valarr{0}{0,0}{14}{10,17} & \valarr{30}{28,31}{67}{66,68} \\
CGQL-M & \valarr{71}{68,73}{82}{81,84} & \valarr{4}{1,8}{57}{55,60} & \valarr{42}{40,43}{92}{86,97} & \valarr{6}{3,8}{34}{27,40} & \valarr{74}{65,84}{92}{89,95} & \offbestarr{100}{100,100}{97}{96,99} & \valarr{0}{0,0}{29}{19,39} & \valarr{41}{39,43}{98}{97,98} & \offbestarr{8}{7,9}{20}{20,20} & \valarr{1}{0,1}{35}{28,41} & \valarr{35}{34,35}{64}{62,65} \\
CGQL-L & \valarr{65}{62,67}{79}{77,80} & \valarr{3}{1,6}{57}{55,59} & \valarr{62}{57,67}{83}{81,85} & \valarr{6}{5,8}{8}{7,10} & \valarr{88}{84,92}{67}{58,76} & \bestarr{90}{86,93}{100}{100,100} & \valarr{0}{0,0}{25}{16,34} & \valarr{45}{43,47}{99}{99,100} & \offbestarr{8}{7,9}{20}{20,20} & \valarr{0}{0,1}{42}{38,46} & \valarr{37}{36,38}{58}{57,59} \\
\multicolumn{12}{@{}l@{}}{\makebox[\linewidth][l]{\textsc{\textbf{Ours}}~\hrulefill}} \\
\rowcolor{qamrowbg}[2pt][5pt]
\qpilotu & \valarr{82}{76,88}{96}{94,98} & \valarr{1}{0,4}{89}{85,94} & \qambestarr{63}{54,75}{99}{98,100} & \valarr{8}{2,14}{59}{40,72} & \valarr{89}{80,95}{98}{96,100} & \qambothbestarr{100}{100,100}{100}{100,100} & \qambestarr{16}{6,25}{100}{100,100} & \qambothbestarr{75}{71,80}{100}{99,100} & \valarr{7}{4,10}{69}{57,74} & \valarr{3}{0,6}{78}{76,80} & \valarr{44}{42,47}{89}{87,90} \\
\rowcolor{qamrowbg}[2pt][5pt]
\qpilotm & \valarr{82}{74,88}{97}{93,98} & \valarr{10}{0,14}{90}{85,95} & \qambestarr{71}{64,75}{99}{98,100} & \qambestarr{12}{6,16}{60}{38,70} & \valarr{95}{91,98}{99}{97,100} & \qambothbestarr{100}{99,100}{100}{100,100} & \qambestarr{17}{2,27}{100}{100,100} & \qambothbestarr{75}{70,81}{100}{99,100} & \valarr{7}{5,9}{72}{67,75} & \qamoffbestarr{10}{3,20}{79}{78,80} & \qambothbestarr{48}{46,50}{90}{87,91} \\
\bottomrule
\end{tabular}
\caption{\textbf{OGBench: Offline $\to$ Online Success Rate.} Each cell shows end-of-offline (gray) and end-of-online (colored) success rates with seed-level $95\%$ bootstrap CIs in brackets. Bold marks the best offline and best online result per column.\protect\footnotemark}
\label{tab:offline_to_online_arrow}
\end{table}
\footnotetext{All baseline numbers are reported directly from~\citet{li2026qam}; only our methods (\qpilotu, \qpilotm) were run from scratch.}

\subsection{OGBench: Offline and Offline-to-Online (All Components Trained from Scratch)}
\label{sec:exp:ogbench}

\paragraph{Setup.}
We evaluate on the OGBench~\citep{park2025ogbench} single-task benchmark, which spans 10 continuous-control domains (5 navigation, 5 manipulation) with 5 tasks per domain, for a total of 50 tasks. Each method first trains for $10^6$ offline gradient steps on the OGBench dataset, then performs $5 \times 10^5$ online steps that interleave environment interaction (one gradient update per one environment interaction) with gradient updates over the offline data and an online replay buffer. Manipulation domains use action chunking, navigation domains use single-step actions, per-domain $(H, \gamma, \rho)$ follow \citet{li2026qam} (Appendix~\ref{app:implementation:hparams}).
Critic pessimism is set to $\rho = 0.5$ throughout, except on the humanoid-maze domains, where we follow~\citet{li2026qam} and use $\rho = 0$.
We report mean evaluation success rate at the end of training over $8$ seeds and $50$ evaluation episodes per seed, with $95\%$ bootstrap CIs across seeds shown in brackets.

\paragraph{Baseline.}
We group baselines by how they extract a policy from $(\pi_\beta, Q)$:
(i) Gaussian-policy baseline ReBRAC~\citep{tarasov2023rebrac}
(ii) the backprop-based flow extractor FBRAC~\citep{wang2023dql}, which differentiates through the denoising chain
(iii) fine-tuning extractors that update the base flow with a critic-aware objective, including the one-step distillation FQL~\citep{park2025fql}, and the adjoint-matching methods QAM and QAM-EDIT (QAM-E) of~\citet{li2026qam}
(iv) inference-time steering methods that wrap a frozen flow, including DSRL~\citep{wagenmaker2025dsrl} for latent-space policy steering, IFQL~\citep{hansenestruch2023idql} for implicit Q-learning policy extraction with Best-of-$N$ ($N=32$, following \citet{li2026qam}), FEdit~\citep{dong2025expo,yuan2025decorator} for post-hoc action refinement via learned Gaussian edit policies, CGQL~\citep{dhariwal2021classifier} for conditional generation via classifier guidance, and the CGQL-MSE/Linex (CGQL-M / CGQL-L) reward-shaping variants of CGQL following~\citet{li2026qam}.
Per-domain hyperparameters for all baselines follow~\citet{li2026qam}.
For our method, we evaluate both \qpilotu (Tweedie denoised estimate) and \qpilotm (MFM posterior). Both share the base flow, critic ensemble, and Adam optimizer of \citet{li2026qam}. \qpilotm{} additionally trains the auxiliary meta flow map. The steering coefficient $\alpha$ is tuned per domain following \citet{li2026qam}'s protocol, using two tuning tasks per domain (tasks 1 and 4 for navigation, tasks 2 and 4 for manipulation), 4 seeds per configuration. The selected value is reused across all five tasks of that domain and evaluated on 8 fresh seeds. The sweep grid and selected values are in Appendix~\ref{app:implementation:hparams}.

\paragraph{Results.}
Both variants lead in aggregate online score (\qpilotm at 90\%, \qpilotu at 89\%), and are especially performant on the \texttt{humanoid} tasks, improving by 25 absolute percentage points over the closest baseline on \texttt{humanoid-large}. \qpilotu is surprisingly effective online given its simplicity, beating much more elaborate methods such as QAM-E, though it slightly underperforms on certain tasks offline. \qpilotm closes this offline performance gap by replacing the point estimate with the MFM posterior (\cref{eq:q_steering_grad}). We hypothesize that accurate posterior estimates may matter more offline where the replay buffer's action coverage is limited. \qpilots significantly outperforms CGQL-Linex and DSRL, likely owing at least partially to our method's ability to use a critic directly rather than distilling it into an auxiliary critic that operates over noised actions. Per-task learning curves across all 50 tasks are reported in Appendix~\ref{app:implementation:hparams}.

\subsection{LIBERO: Online RL (Frozen Base Policy, Learned Critic Ensemble)}
\label{sec:exp:libero}
\paragraph{Setup.}
We test whether both \qpilots variants can steer a large frozen pretrained base policy, a regime where full fine-tuning is impractical because the base flow has billions of parameters. We use the official $\pi_{0.5}$-LIBERO checkpoint of \citet{black2025pi05}, pretrained on a large closed-source robotics mixture and then finetuned on LIBERO-40, as the frozen base flow. We evaluate on six tasks from the larger LIBERO-90 suite~\citep{liu2023libero} on which $\pi_{0.5}$'s zero-shot success rate is between 5 and 50 percent. Per-task descriptions are in Appendix~\ref{app:implementation:hparams:libero}. For \qpilotm, we distill the auxiliary meta flow map on the public LIBERO-90 demonstrations (\cref{sec:method:training}), keeping the $\pi_{0.5}$-LIBERO weights frozen. Each method runs $5\times10^5$ online environment steps with action chunking matching the base policy. We report results over $5$ seeds with $95\%$ seed-level bootstrap confidence intervals. We follow DSRL's~\citep{wagenmaker2025dsrl} simulation setup, but switch to SARSA targets in place of Bellman targets. Constructing $a' \sim \pi^\star(\cdot|s')$ for the off-policy target would require sampling from $\pi_{0.5}$ inside every critic update, which is prohibitive at $3$B parameters. SARSA reuses the buffer's stored $a'$ and avoids this cost. We also conduct a value learning ablation study (Appendix~\ref{app:value_ablation}) to show that SARSA outperforms other feasible approaches (e.g., IQL, Monte-Carlo, one-step distillation) in LIBERO.

\paragraph{Baselines and Hyperparameters.}
We compare both \qpilots variants against DSRL~\citep{wagenmaker2025dsrl} 
, Best-of-$N$ (BoN, $N=5$) action selection under the learned critic, EXPO~\citep{dong2025expo}, and CGQL-LinEx~\citep{park2025fql}. All baselines other than DSRL share the same SARSA critic for fairness. We tune one knob per method per task. For DSRL we sweep the action magnitude $b_W$ over $\{1, 2, 3\}$, matching \citet{wagenmaker2025dsrl}.
For \qpilots{} and CGQL-LinEx  we tune the steering coefficient $\alpha$ over $\{0.1, 0.2, 0.3\}$. Selected per-task values are listed in \cref{tab:libero_hparams}. We also include SAC~\citep{haarnoja2018sac} as a from-scratch baseline; it achieves negligible performance within the $5\times10^5$-step online budget.

\paragraph{Results.}
\Cref{fig:libero} shows per-task online learning curves at $5\times10^5$ steps. Per-task end-of-training detailed success rates are tabulated in Appendix~\ref{app:implementation:hparams}. \qpilotu improves over the frozen $\pi_{0.5}$ base on all six tasks and is the best method on five of six (tasks 26, 31, 38, 60, 64), with the gap over the strongest baseline DSRL ranging from $4\%$ on task 60 (within the seed-level bootstrap CIs) to $55\%$ on task 64. BoN, EXPO, and CGQL-LinEx, which either lack a directional gradient signal or steer in noise space, plateau well below the steering methods. \qpilotm tracks \qpilotu on most tasks and beats it on task 59, but does not improve over \qpilotu in aggregate on this benchmark. One likely reason is that MFM theory assumes the auxiliary network is distilled on the same data mixture that produced the base flow \citep{potaptchik2026mfm}. $\pi_{0.5}$'s pretraining mixture is closed-source and the public LIBERO-90 demonstrations only partially overlaps with $\pi_{0.5}$'s training data. 
The resulting distillation gap might explain why the simpler Tweedie estimate of \qpilotu is more performant here.

\begin{figure}[t]
\centering
\resizebox{0.99\linewidth}{!}{
\includegraphics{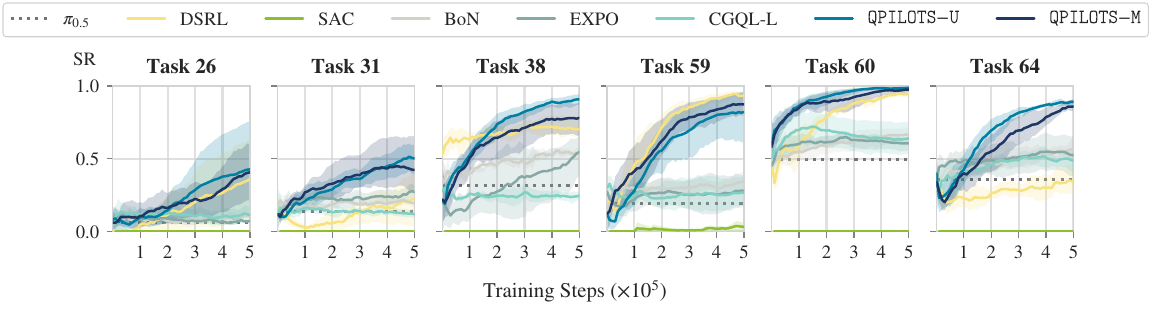}
}
\caption{
\textbf{LIBERO Online RL.} Per-task success rate over training for the LIBERO benchmark. Curves are time-weighted EMA-smoothed means over $5$ seeds; shaded bands show seed-level $95\%$ bootstrap CIs. Per-task descriptions, hyperparameters, and end-of-training numbers are in \cref{app:implementation:hparams:libero}.}
\label{fig:libero}
\end{figure}

\section{Discussion}
\label{sec:discussion}
We present \qpilots, an inference-time recipe for applying RL to flow policies that injects a clean-action Q-gradient into the Euler integrator at every denoising step, leaving the base flow and the critic to be trained with off-the-shelf objectives. Our main variant, \qpilotm, uses a Meta Flow Map to draw differentiable posterior samples, yielding a posterior-gradient estimator that inherits the asymptotic convergence guarantees of \citet{potaptchik2026mfm} under the true-posterior idealization. \qpilotu{} provides a lightweight alternative by replacing posterior sampling with a Tweedie point estimate. 
On OGBench offline-to-online, both variants outperform training-time extractors and prior inference-time steering methods, reaching $90\%$ aggregate success across $50$ tasks; on LIBERO with the frozen $\pi_{0.5}$ generalist, \qpilots improves over the zero-shot base on every evaluated task and surpasses DSRL on average. 

\paragraph{Limitations and Future Work.} Similar to previous work, the steering coefficient $\alpha$ requires manual tuning per domain (or per task on LIBERO). The gradient rescaling in \cref{eq:grad_rescale} avoids the singular $\sigma_t^2/2$ schedule but does not eliminate the need to pick a magnitude, and an adaptive choice tied to local critic curvature is a natural next step. We also see two promising extensions. One is to train a residual edit-flow analogue of \qpilotm in the spirit of QAM-E to further push performance. The second is to bring offline-trained value functions to large VLAs, where an offline-pretrained $Q$ would take advantage of our method's critic-agnostic nature, which extends the online setting in this work.

\newpage
\begin{ack}
This work was enabled in part by LG Electronics, Toronto AI Lab (Grant Ref No. 2025-2297), the Digital Research Alliance of Canada (\texttt{alliancecan.ca}), the NVIDIA Academic Grant Program, and Google TPU Research Cloud (TRC).
\end{ack}

\bibliographystyle{unsrtnat}
\bibliography{references}

@inproceedings{ball2023rlpd,
  author = {Philip J. Ball and Laura Smith and Ilya Kostrikov and Sergey Levine},
  title = {Efficient Online Reinforcement Learning with Offline Data},
  booktitle = {International Conference on Machine Learning (ICML)},
  pages = {1577--1594},
  year = {2023},
}

@article{mihatsch2002risk,
  title={Risk-sensitive reinforcement learning},
  author={Mihatsch, Oliver and Neuneier, Ralph},
  journal={Machine learning},
  volume={49},
  number={2},
  pages={267--290},
  year={2002},
  publisher={Springer}
}

@inproceedings{peters2010reps,
  title={Relative Entropy Policy Search},
  author={Peters, Jan and M{\"u}lling, Katharina and Alt{\"u}n, Yasemin},
  booktitle={AAAI Conference on Artificial Intelligence},
  year={2010}
}

@inproceedings{peng2019awr,
  title={Advantage-Weighted Regression: Simple and Scalable Off-Policy Reinforcement Learning},
  author={Peng, Xue Bin and Kumar, Aviral and Zhang, Grace and Levine, Sergey},
  booktitle={International Conference on Learning Representations},
  year={2020}
}

@inproceedings{abdolmaleki2018mpo,
  title={Maximum a Posteriori Policy Optimisation},
  author={Abdolmaleki, Abbas and Springenberg, Jost Tobias and Tassa, Yuval and Munos, R{\'e}mi and Heess, Nicolas and Riedmiller, Martin},
  booktitle={International Conference on Learning Representations},
  year={2018}
}

@inproceedings{tarasov2023rebrac,
  author = {Denis Tarasov and Vladislav Kurenkov and Alexander Nikulin and Sergey Kolesnikov},
  title = {Revisiting the Minimalist Approach to Offline Reinforcement Learning},
  booktitle = {Advances in Neural Information Processing Systems (NeurIPS)},
  year = {2023},
}

@inproceedings{park2025ogbench,
  author = {Seohong Park and Kevin Frans and Benjamin Eysenbach and Sergey Levine},
  title = {{OGBench}: Benchmarking Offline Goal-Conditioned {RL}},
  booktitle = {International Conference on Learning Representations (ICLR)},
  year = {2025},
}

@article{li2026qam,
  title={Q-learning with Adjoint Matching},
  author={Li, Qiyang and Levine, Sergey},
  journal={arXiv preprint arXiv:2601.14234},
  year={2026}
}

@inproceedings{park2025fql,
  author = {Seohong Park and Qiyang Li and Sergey Levine},
  title = {Flow {Q}-Learning},
  booktitle = {International Conference on Machine Learning (ICML)},
  year = {2025},
}

@inproceedings{park2024bottleneck,
  author = {Seohong Park and Kevin Frans and Sergey Levine and Aviral Kumar},
  title = {Is Value Learning Really the Main Bottleneck in Offline {RL}?},
  booktitle = {Advances in Neural Information Processing Systems (NeurIPS)},
  year = {2024},
}

@inproceedings{wagenmaker2025dsrl,
  title={Steering Your Diffusion Policy with Latent Space Reinforcement Learning},
  author={Wagenmaker, Andrew and Zhang, Yunchu and Nakamoto, Mitsuhiko and Park, Seohong and Yagoub, Waleed and Nagabandi, Anusha and Gupta, Abhishek and Levine, Sergey},
  booktitle={Conference on Robot Learning},
  pages={258--282},
  year={2025},
  organization={PMLR}
}

@article{dong2025expo,
  title={Expo: Stable reinforcement learning with expressive policies},
  author={Dong, Perry and Li, Qiyang and Sadigh, Dorsa and Finn, Chelsea},
  journal={arXiv preprint arXiv:2507.07986},
  year={2025}
}

@article{attarian2026ufops,
  title={Update-Free On-Policy Steering via Verifiers},
  author={Attarian, Maria and Vyse, Ian and Voelcker, Claas and Gerigk, Jasper and Opryshko, Evgenii and Almasri, Anas and Singh, Sumeet and Du, Yilun and Gilitschenski, Igor},
  journal={arXiv preprint arXiv:2603.10282},
  year={2026}
}

@inproceedings{du2025dynaguide,
  title={DynaGuide: Steering Diffusion Polices with Active Dynamic Guidance},
  author={Du, Max and Song, Shuran},
  year = {2025},
  booktitle={The Thirty-ninth Annual Conference on Neural Information Processing Systems}
}

@inproceedings{nakamoto2024vgps,
  author = {Mitsuhiko Nakamoto and Oier Mees and Aviral Kumar and Sergey Levine},
  title = {Steering Your Generalists: Improving Robotic Foundation Models via Value Guidance},
  booktitle = {Conference on Robot Learning (CoRL)},
  pages = {4996--5013},
  year = {2024},
}

@inproceedings{bansal2024universal,
  author = {Arpit Bansal and Hong-Min Chu and Avi Schwarzschild and Soumyadip Sengupta and Micah Goldblum and Jonas Geiping and Tom Goldstein},
  title = {Universal Guidance for Diffusion Models},
  booktitle = {International Conference on Learning Representations (ICLR)},
  year = {2024},
}

@article{potaptchik2026mfm,
  title={Meta Flow Maps enable scalable reward alignment},
  author={Potaptchik, Peter and Saravanan, Adhi and Mammadov, Abbas and Prat, Alvaro and Albergo, Michael S and Teh, Yee Whye},
  journal={arXiv preprint arXiv:2601.14430},
  year={2026}
}

@inproceedings{lipman2023flow,
  author = {Yaron Lipman and Ricky T. Q. Chen and Heli Ben-Hamu and Maximilian Nickel and Matthew Le},
  title = {Flow Matching for Generative Modeling},
  booktitle = {International Conference on Learning Representations (ICLR)},
  year = {2023},
}

@inproceedings{liu2023rectifiedflow,
  author = {Xingchao Liu and Chengyue Gong and Qiang Liu},
  title = {Flow Straight and Fast: Learning to Generate and Transfer Data with Rectified Flow},
  booktitle = {International Conference on Learning Representations (ICLR)},
  year = {2023},
}

@article{chi2025diffusionpolicy,
  author = {Cheng Chi and Zhenjia Xu and Siyuan Feng and Eric Cousineau and Yilun Du and Benjamin Burchfiel and Russ Tedrake and Shuran Song},
  title = {Diffusion Policy: Visuomotor Policy Learning via Action Diffusion},
  journal = {International Journal of Robotics Research},
  volume = {44},
  pages = {1684--1704},
  year = {2025},
}

@article{black2024pi0,
  title={$\pi_0$: A Vision-Language-Action Flow Model for General Robot Control},
  author={Black, Kevin and Brown, Noah and Driess, Danny and Esmail, Adnan and Equi, Michael and Finn, Chelsea and Fusai, Niccolo and Groom, Lachy and Hausman, Karol and Ichter, Brian and others},
  journal={arXiv preprint arXiv:2410.24164},
  year={2024}
}

@inproceedings{kostrikov2022iql,
  author = {Ilya Kostrikov and Ashvin Nair and Sergey Levine},
  title = {Offline Reinforcement Learning with Implicit {Q}-Learning},
  booktitle = {International Conference on Learning Representations (ICLR)},
  year = {2022},
}

@inproceedings{fujimoto2021td3bc,
  author = {Scott Fujimoto and Shixiang Shane Gu},
  title = {A Minimalist Approach to Offline Reinforcement Learning},
  booktitle = {Advances in Neural Information Processing Systems (NeurIPS)},
  pages = {20132--20145},
  year = {2021},
}

@inproceedings{dhariwal2021classifier,
  author = {Prafulla Dhariwal and Alexander Quinn Nichol},
  title = {Diffusion Models Beat {GANs} on Image Synthesis},
  booktitle = {Advances in Neural Information Processing Systems (NeurIPS)},
  pages = {8780--8794},
  year = {2021},
}

@inproceedings{haarnoja2018sac,
  author = {Tuomas Haarnoja and Aurick Zhou and Pieter Abbeel and Sergey Levine},
  title = {Soft Actor-Critic: Off-Policy Maximum Entropy Deep Reinforcement Learning with a Stochastic Actor},
  booktitle = {International Conference on Machine Learning (ICML)},
  pages = {1856--1865},
  year = {2018},
}

@inproceedings{song2021scorebased,
  author = {Yang Song and Jascha Sohl-Dickstein and Diederik P. Kingma and Abhishek Kumar and Stefano Ermon and Ben Poole},
  title = {Score-Based Generative Modeling through Stochastic Differential Equations},
  booktitle = {International Conference on Learning Representations (ICLR)},
  year = {2021},
}

@inproceedings{wang2023dql,
  author = {Zhendong Wang and Jonathan J. Hunt and Mingyuan Zhou},
  title = {Diffusion Policies as an Expressive Policy Class for Offline Reinforcement Learning},
  booktitle = {International Conference on Learning Representations (ICLR)},
  year = {2023},
}

@inproceedings{ding2024consistency,
  author = {Zihan Ding and Chi Jin},
  title = {Consistency Models as a Rich and Efficient Policy Class for Reinforcement Learning},
  booktitle = {International Conference on Learning Representations (ICLR)},
  year = {2024},
}

@inproceedings{espinosadice2025shortcut,
  title={Scaling Offline RL via Efficient and Expressive Shortcut Models},
  author={Espinosa-Dice, Nicolas and Zhang, Yiyi and Chen, Yiding and Guo, Bradley and Oertell, Owen and Swamy, Gokul and Brantley, Kiant{\'e} and Sun, Wen},
  booktitle={The Thirty-ninth Annual Conference on Neural Information Processing Systems},
  year = {2025},
}

@inproceedings{psenka2024qsm,
  author = {Michael Psenka and Alejandro Escontrela and Pieter Abbeel and Yi Ma},
  title = {Learning a Diffusion Model Policy from Rewards via {Q}-Score Matching},
  booktitle = {International Conference on Machine Learning (ICML)},
  year = {2024},
}

@inproceedings{fang2025dac,
  author = {Linjiajie Fang and Ruoxue Liu and Jing Zhang and Wenjia Wang and Bingyi Jing},
  title = {Diffusion Actor-Critic: Formulating Constrained Policy Iteration as Diffusion Noise Regression for Offline Reinforcement Learning},
  booktitle = {International Conference on Learning Representations (ICLR)},
  year = {2025},
}

@article{frans2025cfgrl,
  title={Diffusion guidance is a controllable policy improvement operator},
  author={Frans, Kevin and Park, Seohong and Abbeel, Pieter and Levine, Sergey},
  journal={arXiv preprint arXiv:2505.23458},
  year={2025}
}

@article{hansenestruch2023idql,
  title={Idql: Implicit q-learning as an actor-critic method with diffusion policies},
  author={Hansen-Estruch, Philippe and Kostrikov, Ilya and Janner, Michael and Kuba, Jakub Grudzien and Levine, Sergey},
  journal={arXiv preprint arXiv:2304.10573},
  year={2023}
}

@inproceedings{singh2021parrot,
  author = {Avi Singh and Huihan Liu and Gaoyue Zhou and Albert Yu and Nicholas Rhinehart and Sergey Levine},
  title = {Parrot: Data-Driven Behavioral Priors for Reinforcement Learning},
  booktitle = {International Conference on Learning Representations (ICLR)},
  year = {2021},
}

@inproceedings{yuan2025decorator,
  author = {Xiu Yuan and Tongzhou Mu and Stone Tao and Yunhao Fang and Mengke Zhang and Hao Su},
  title = {Policy Decorator: Model-Agnostic Online Refinement for Large Policy Model},
  booktitle = {International Conference on Learning Representations (ICLR)},
  year = {2025},
}

@article{mark2025policyagnostic,
  title={Policy agnostic rl: Offline rl and online rl fine-tuning of any class and backbone},
  author={Mark, Max Sobol and Gao, Tian and Sampaio, Georgia Gabriela and Srirama, Mohan Kumar and Sharma, Archit and Finn, Chelsea and Kumar, Aviral},
  journal={arXiv preprint arXiv:2412.06685},
  year={2024}
}

@inproceedings{ren2024diffusion,
  title={Diffusion Policy Policy Optimization},
  author={Ren, Allen Z and Lidard, Justin and Ankile, Lars Lien and Simeonov, Anthony and Agrawal, Pulkit and Majumdar, Anirudha and Burchfiel, Benjamin and Dai, Hongkai and Simchowitz, Max},
  booktitle={International Conference on Learning Representations (ICLR)},
  year={2025}
}

@article{mcallister2025flow,
  title={Flow matching policy gradients},
  author={McAllister, David and Ge, Songwei and Yi, Brent and Kim, Chung Min and Weber, Ethan and Choi, Hongsuk and Feng, Haiwen and Kanazawa, Angjoo},
  journal={arXiv preprint arXiv:2507.21053},
  year={2025}
}

@inproceedings{black2025pi05,
  title={$\pi_{0.5}$: a Vision-Language-Action Model with Open-World Generalization},
  author={Black, Kevin and Brown, Noah and Darpinian, James and Dhabalia, Karan and Driess, Danny and Esmail, Adnan and Equi, Michael Robert and Finn, Chelsea and Fusai, Niccolo and Galliker, Manuel Y and others},
  booktitle={9th Annual Conference on Robot Learning},
  year={2025}
}

@article{liu2023libero,
  title={Libero: Benchmarking knowledge transfer for lifelong robot learning},
  author={Liu, Bo and Zhu, Yifeng and Gao, Chongkai and Feng, Yihao and Liu, Qiang and Zhu, Yuke and Stone, Peter},
  journal={Advances in Neural Information Processing Systems},
  volume={36},
  pages={44776--44791},
  year={2023}
}

@article{li2025three,
  title={The Three Regimes of Offline-to-Online Reinforcement Learning},
  author={Li, Lu and Ni, Tianwei and Sun, Yihao and Bacon, Pierre-Luc},
  journal={arXiv preprint arXiv:2510.01460},
  year={2025}
}

@inproceedings{lee2022offline,
  title={Offline-to-online reinforcement learning via balanced replay and pessimistic q-ensemble},
  author={Lee, Seunghyun and Seo, Younggyo and Lee, Kimin and Abbeel, Pieter and Shin, Jinwoo},
  booktitle={Conference on Robot Learning},
  pages={1702--1712},
  year={2022},
  organization={PMLR}
}

@inproceedings{song2022hybrid,
  title={Hybrid RL: Using both offline and online data can make RL efficient},
  author={Song, Yuda and Zhou, Yifei and Sekhari, Ayush and Bagnell, Drew and Krishnamurthy, Akshay and Sun, Wen},
  booktitle={The Eleventh International Conference on Learning Representations},
  year={2023}
}

@article{nakamoto2023calql,
  title={Cal-ql: Calibrated offline rl pre-training for efficient online fine-tuning},
  author={Nakamoto, Mitsuhiko and Zhai, Simon and Singh, Anikait and Sobol Mark, Max and Ma, Yi and Finn, Chelsea and Kumar, Aviral and Levine, Sergey},
  journal={Advances in Neural Information Processing Systems},
  volume={36},
  pages={62244--62269},
  year={2023}
}

@inproceedings{li2025qc,
  title={Reinforcement Learning with Action Chunking},
  author={Li, Qiyang and Zhou, Zhiyuan and Levine, Sergey},
  booktitle={The Thirty-ninth Annual Conference on Neural Information Processing Systems},
  year={2025}
}

@inproceedings{janner2022planning,
  title={Planning with Diffusion for Flexible Behavior Synthesis},
  author={Janner, Michael and Du, Yilun and Tenenbaum, Joshua and Levine, Sergey},
  booktitle={International Conference on Machine Learning},
  pages={9902--9915},
  year={2022},
  organization={PMLR}
}

@inproceedings{ajay2022conditional,
  title={Is Conditional Generative Modeling all you need for Decision Making?},
  author={Ajay, Anurag and Du, Yilun and Gupta, Abhi and Tenenbaum, Joshua B and Jaakkola, Tommi S and Agrawal, Pulkit},
  booktitle={The Eleventh International Conference on Learning Representations},
  year={2023},
}

@article{kang2023efficient,
  title={Efficient diffusion policies for offline reinforcement learning},
  author={Kang, Bingyi and Ma, Xiao and Du, Chao and Pang, Tianyu and Yan, Shuicheng},
  journal={Advances in Neural Information Processing Systems},
  volume={36},
  pages={67195--67212},
  year={2023}
}

@article{he2023diffcps,
  title={Diffcps: Diffusion model based constrained policy search for offline reinforcement learning},
  author={He, Longxiang and Shen, Li and Zhang, Linrui and Tan, Junbo and Wang, Xueqian},
  journal={arXiv preprint arXiv:2310.05333},
  year={2023}
}

@inproceedings{lu2023contrastive,
  title={Contrastive energy prediction for exact energy-guided diffusion sampling in offline reinforcement learning},
  author={Lu, Cheng and Chen, Huayu and Chen, Jianfei and Su, Hang and Li, Chongxuan and Zhu, Jun},
  booktitle={International Conference on Machine Learning},
  pages={22825--22855},
  year={2023},
  organization={PMLR}
}

@article{chen2024diffusion,
  title={Diffusion policies creating a trust region for offline reinforcement learning},
  author={Chen, Tianyu and Wang, Zhendong and Zhou, Mingyuan},
  journal={Advances in Neural Information Processing Systems},
  volume={37},
  pages={50098--50125},
  year={2024}
}

@article{zhang2024entropy,
  title={Entropy-regularized diffusion policy with q-ensembles for offline reinforcement learning},
  author={Zhang, Ruoqi and Luo, Ziwei and Sj{\"o}lund, Jens and Sch{\"o}n, Thomas B and Mattsson, Per},
  journal={Advances in neural information processing systems},
  volume={37},
  pages={98871--98897},
  year={2024}
}

@inproceedings{ankile2025imitation,
  title={From imitation to refinement-residual rl for precise assembly},
  author={Ankile, Lars and Simeonov, Anthony and Shenfeld, Idan and Torne, Marcel and Agrawal, Pulkit},
  booktitle={2025 IEEE International Conference on Robotics and Automation (ICRA)},
  pages={01--08},
  year={2025},
  organization={IEEE}
}

@inproceedings{johannink2019residual,
  title={Residual reinforcement learning for robot control},
  author={Johannink, Tobias and Bahl, Shikhar and Nair, Ashvin and Luo, Jianlan and Kumar, Avinash and Loskyll, Matthias and Ojea, Juan Aparicio and Solowjow, Eugen and Levine, Sergey},
  booktitle={2019 international conference on robotics and automation (ICRA)},
  pages={6023--6029},
  year={2019},
  organization={IEEE}
}

@article{ho2022classifier,
  title={Classifier-free diffusion guidance},
  author={Ho, Jonathan and Salimans, Tim},
  journal={arXiv preprint arXiv:2207.12598},
  year={2022}
}

@article{xie2021policy,
  title={Policy finetuning: Bridging sample-efficient offline and online reinforcement learning},
  author={Xie, Tengyang and Jiang, Nan and Wang, Huan and Xiong, Caiming and Bai, Yu},
  journal={Advances in Neural Information Processing Systems},
  volume={34},
  pages={27395--27407},
  year={2021}
}

@inproceedings{zhang2025reinflow,
  title={ReinFlow: Fine-tuning Flow Matching Policy with Online Reinforcement Learning},
  author={Zhang, Tonghe and Yu, Chao and Su, Sichang and Wang, Yu},
  year = {2025},
  booktitle={The Thirty-ninth Annual Conference on Neural Information Processing Systems}
}

@inproceedings{zhou2024wsrl,
  title={Efficient Online Reinforcement Learning Fine-Tuning Need Not Retain Offline Data},
  author={Zhou, Zhiyuan and Peng, Andy and Li, Qiyang and Levine, Sergey and Kumar, Aviral},
  year = {2024},
  booktitle={The Thirteenth International Conference on Learning Representations}
}

@article{song2021maximum,
  title={Maximum likelihood training of score-based diffusion models},
  author={Song, Yang and Durkan, Conor and Murray, Iain and Ermon, Stefano},
  journal={Advances in Neural Information Processing Systems},
  volume={34},
  pages={1415--1428},
  year={2021}
}

@article{holderrieth2026diamond,
  title={Diamond Maps: Efficient Reward Alignment via Stochastic Flow Maps},
  author={Holderrieth, Peter and Chen, Douglas and Eyring, Luca and Shah, Ishin and Anantharaman, Giri and He, Yutong and Akata, Zeynep and Jaakkola, Tommi and Boffi, Nicholas Matthew and Simchowitz, Max},
  journal={arXiv preprint arXiv:2602.05993},
  year={2026}
}

@inproceedings{wang2026meanflowql,
  title={One-step generative policies with q-learning: A reformulation of meanflow},
  author={Wang, Zeyuan and Li, Da and Chen, Yulin and Shi, Ye and Bai, Liang and Yu, Tianyuan and Fu, Yanwei},
  booktitle={Proceedings of the AAAI Conference on Artificial Intelligence},
  volume={40},
  number={31},
  pages={26751--26759},
  year={2026}
}

@inproceedings{wu2023tds,
  author = {Luhuan Wu and Brian L. Trippe and Christian A. Naesseth and David M. Blei and John P. Cunningham},
  title = {Practical and Asymptotically Exact Conditional Sampling in Diffusion Models},
  booktitle = {Advances in Neural Information Processing Systems (NeurIPS)},
  year = {2023},
}

\newpage
\appendix

\section{Contributions}
\label{app:contributions}

Author Contributions: Y.R. and A.B. conceived the project. Y.R. developed the core method, ran OGBench experiments, \qpilotm LIBERO experiments and supported \qpilotu LIBERO experiments. C.C led \qpilotu in LIBERO, baselines in LIBERO, and the ablation study in LIBERO. Y.R. and C.C. made figures. N.R. provided high-level project guidance. Y.R. led paper writing with input from all authors.

\section{Proofs}
\label{app:proofs}

We restate and prove Proposition~\ref{prop:convergence} from Section~\ref{sec:method:theory}.

\begin{proposition}[Convergence of Q-steered MFM sampling]
\label{prop:convergence}
Fix a state $s \in \cS$.
Let $\hat{p}_1^s$ denote the distribution over actions produced by Algorithm~\ref{alg:mfm-steer} (with the $\sigma_t^2/2$ formulation in place of gradient rescaling) using $K$ uniform Euler steps and $N$ independent MFM posterior samples per step.
Suppose:
\begin{enumerate}
  \item The tilt scalar $\tau \bar{Q}(s, \cdot)$ satisfies $\tau \bar{Q}(s, \cdot) \in C^2(\R^A)$ with the function and its first two derivatives all bounded. In practice we obtain a bounded tilt scalar by clipping the (batch-normalized) Q-value to a fixed range; see remark below.
  \item The MFM $\hat{X}_{0,1}(\epsilon; t, x)$ and the base velocity $v_\theta(x, t, s)$ are jointly $C^2$ in $(t, x)$ on $[0,1] \times \R^A$, with all first- and second-order partial derivatives bounded.
  \item The diffusion coefficient satisfies $0 < \sigma_{\min} \leq \sigma_t \leq \sigma_{\max}$ for all $t \in [0, 1]$.
\end{enumerate}
Then the optimal steered velocity $v_\theta^\star(x, t, s) = v_\theta(x, t, s) + \frac{\sigma_t^2}{2} \nabla V_t(x, s)$ is Lipschitz in $x$ and H\"{o}lder continuous in $t$, and there exists a constant $C > 0$ independent of $K$ and $N$ such that
\begin{equation}
  W_2\!\left(\hat{p}_1^s,\, \pi^\star(\cdot | s)\right) \leq C \left( \frac{1}{\sqrt{K}} + \frac{1}{\sqrt{N}} \right), \qquad
  \mathrm{KL}\!\left(\hat{p}_1^s \,\|\, \pi^\star(\cdot | s)\right) \leq C \left( \frac{1}{K} + \frac{1}{N} \right).
  \label{eq:convergence}
\end{equation}
\end{proposition}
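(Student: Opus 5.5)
The proof reduces the statement to the convergence theorem of \citet{potaptchik2026mfm}. I would check its hypotheses for the RL instantiation $\psi_s = \tau\bar Q(s,\cdot)$ with the state $s$ held fixed. The argument has three steps: (i) regularity of the log-tilt potential $V_t$, (ii) a one-step error decomposition into drift-discretization and Monte Carlo gradient error, and (iii) stability of the Euler scheme to propagate per-step errors to $W_2$ and KL.

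\emph{Step 1: regularity.} Since the sampler is idealized and exact, write $x_1 = \Phi_t(\epsilon,x) := \hat X_{0,1}(\epsilon;t,x,s)$ with $\epsilon\sim p_0$, so that
\[
V_t(x;s)=\log \E_\epsilon\bigl[e^{\psi_s(\Phi_t(\epsilon,x))}\bigr].
\]
Assumption 1 gives $e^{\psi_s}\in[e^{-M},e^{M}]$, so the normalizer is bounded below. Assumption 2 gives bounded $\nabla_x\Phi_t$ and $\nabla_x^2\Phi_t$. Differentiating under the expectation (dominated convergence) then yields
\[
\nabla V_t=\E[w\,\nabla_x\Phi^\top\nabla\psi_s], \qquad w=\frac{e^{\psi_s}}{\E e^{\psi_s}},
\]
where the weight $w$ is bounded. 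One more differentiation gives a bounded Hessian, so $\nabla V_t$ is Lipschitz in $x$. Bounded mixed $(t,x)$ derivatives likewise give Lipschitz (hence H\"older) dependence on $t$. Combined with Assumption 3, $\sigma_t^2\le\sigma_{\max}^2$, and the regularity of $v_\theta$, this proves the first claim about $v^\star_\theta$. By the result recalled in \cref{sec:prelim:tilting}, the steered flow with drift $v^\star_\theta$ has terminal law exactly $\pi^\star(\cdot|s)$.

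\emph{Step 2: estimator error.} The estimator in \cref{eq:q_steering_grad} is a ratio:
\[
\widehat{\nabla V}=\frac{\frac1N\sum_n e^{\psi_n}\,\nabla_x\psi_n}{\frac1N\sum_n e^{\psi_n}}.
\]
Numerator and denominator are i.i.d. averages of bounded variables, and the denominator is bounded below by $e^{-M}$. A delta-method or ratio-estimator bound then gives bias $O(1/N)$ and mean-square error $\E\|\widehat{\nabla V}-\nabla V\|^2=O(1/N)$, uniformly in $(t,x)$. Fresh $\epsilon_n$ are drawn at every step, so the errors at successive steps are conditionally independent given the past.

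\emph{Step 3: propagation.} I would couple the discrete chain to the continuous process. The continuous process is either the ODE with drift $v^\star_\theta$ or the equivalent SDE with the same marginals; the latter is available because $\sigma_t\ge\sigma_{\min}>0$. For $W_2$, a synchronous coupling with a Gr\"onwall argument using the Lipschitz constant from Step 1 controls two error sources. Local truncation error from H\"older time-regularity contributes $O(h^{1/2})=O(K^{-1/2})$. Gradient noise contributes $(\sum_i h^2\,\E\|e_i\|^2 + \text{bias})^{1/2}$, which is $O(N^{-1/2})$. Together these give the $W_2$ bound.

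For KL, I would work with the SDE (Euler--Maruyama) formulation and apply Girsanov. The KL between path measures equals
\[
\tfrac12\int_0^1\sigma_t^{-2}\,\E\|b^{\mathrm{disc}}_t-b^\star_t\|^2\,dt,
\]
where $b$ denotes the drift. Here $\sigma_{\min}>0$ is essential. The drift mismatch splits into a discretization part of order $O(h)$ in squared norm and an estimator part of order $O(1/N)$ from Step 2. The data-processing inequality then transfers the bound to the time-$1$ marginals, giving $C(1/K+1/N)$.

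The main obstacle is Step 3. Two issues need care. First, the KL bound is natural only for the stochastic Euler--Maruyama sampler, so I would interpret the algorithm in that form, as in \citet{potaptchik2026mfm}, and cite their theorem for the bookkeeping. Second, the strict bound requires $\sigma_t$ bounded away from zero and infinity. This excludes the practical schedule $2(1-t)/t$, so assumption 3 must be used explicitly.
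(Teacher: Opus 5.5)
Your proposal is correct and follows the paper's route. The paper's proof also just invokes Proposition C.2 of \citet{potaptchik2026mfm}, and its only real work is your Step 1: the chain rule plus differentiation under the expectation, to make $\nabla V_t$ Lipschitz in $x$ and Lipschitz (hence $\tfrac12$-H\"older) in $t$, with your lower bound $e^{-M}$ on the normalizer making the paper's ``log of an expectation preserves regularity'' step precise, and Steps 2--3 merely sketching what the paper treats as a black box. One small fix: boundedness of $\sigma_t$ alone does not make $\tfrac{\sigma_t^2}{2}\nabla V_t$ H\"older in $t$, so, as the paper does, you must also use that $t\mapsto\sigma_t^2$ is $C^1$ (e.g.\ $2(1-t)/t$ restricted to $[\varepsilon,1-\varepsilon]$), which assumption 3 as literally stated does not provide.
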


\begin{proof}
For brevity, write $\psi(\cdot) := \tau \bar{Q}(s, \cdot)$ \emph{locally within this proof}; this is the tilt scalar of \citet{potaptchik2026mfm}'s framework instantiated to our setting, not the MDP reward of Section~\ref{sec:prelim:rl}.
The convergence rates follow from Proposition C.2 of \citet{potaptchik2026mfm} by setting their tilt scalar to $r$.
It suffices to verify their regularity conditions.
Their assumption 1 ($r \in C^1$ with $r$ and $\nabla r$ bounded) follows from our assumption 1, which additionally requires bounded second derivatives so that $\nabla V_t$ inherits a Lipschitz constant via the chain rule.
Their assumption 2 (MFM $C^1$ with bounded Jacobian) is immediate from our assumption 2.
Their assumption 4 (bounded diffusion) is our assumption 3.

Their assumption 3 requires the optimal steered velocity $v_\theta^\star = v_\theta + \frac{\sigma_t^2}{2} \nabla V_t$ to be $L$-Lipschitz in $x$ and $\frac{1}{2}$-H\"{o}lder in $t$.
We show this follows from our assumptions 1-2.
The log-tilt potential is $V_t(x) = \log \E[\exp(\psi(\hat{X}_{0,1}(\epsilon; t, x)))] = \log \E[\exp(\tau \bar{Q}(s, \hat{X}_{0,1}(\epsilon; t, x)))]$.
Since $\bar{Q}(s, \cdot) \in C^2$ with bounded derivatives and $\hat{X}_{0,1}(\cdot; t, \cdot) \in C^2$ with bounded derivatives, the composition is $C^2$ in $x$ with bounded derivatives by the chain rule.
Taking the logarithm of the expectation (a smooth, monotone operation on the cumulant generating function) preserves this regularity, so $\nabla V_t$ is Lipschitz in $x$.
For Hölder continuity in $t$, assumption 2 gives bounded $\partial_t \hat{X}_{0,1}$ and $\partial_t v_\theta$, and assumption 3 gives a bounded $\sigma_t^2$ that is $C^1$ in $t$ (e.g.\ the standard schedule $\sigma_t^2 = 2(1-t)/t$ on the truncated interval $[\varepsilon, 1-\varepsilon]$); differentiating $V_t$ in $t$ under the expectation (justified by dominated convergence, using assumption 1) yields a bounded $\partial_t \nabla V_t$. Hence $\partial_t v_\theta^\star$ is bounded, so $v_\theta^\star$ is Lipschitz in $t$ on the truncated interval and a fortiori $\tfrac{1}{2}$-Hölder.
\end{proof}

The bound decomposes into two sources of error: time-discretization error ($1/\sqrt{K}$ in $W_2$, $1/K$ in KL), which is standard for Euler-Maruyama schemes, and Monte Carlo estimation error ($1/\sqrt{N}$ in $W_2$, $1/N$ in KL), arising from approximating $\nabla V_t$ with finitely many posterior samples.
Crucially, both errors are controlled independently. Increasing the number of posterior samples $N$ improves the gradient estimate without requiring finer time discretization, and vice versa.

\paragraph{Assumptions in Practice.}
Assumption 2 holds for neural networks with smooth activations (e.g., GeLU, SiLU) and bounded weights, which is enforced by gradient clipping during training.
Assumption 1 is more delicate: a learned critic $\bar{Q}(s, \cdot)$ is not bounded a priori, and $\|\tau \bar{Q}(s, \cdot)\|_\infty$ enters the constant $C$ exponentially through the moment bounds in Proposition C.4 of \citet{potaptchik2026mfm}. We satisfy the assumption operationally by batch-normalizing $\bar{Q}$ across the minibatch before exponentiating and clipping $\tau \cdot \bar{Q}_{\mathrm{norm}}$ to a fixed interval (Section~\ref{sec:method:training}), which keeps the tilt scalar bounded with constants independent of the absolute Q-scale. Strictly speaking, batch normalization makes the tilt scalar depend on the empirical batch distribution and therefore non-stationary across training, which is outside the iid setting analyzed in Proposition C.4; we treat the resulting bound as a per-batch guarantee rather than a uniform one.
Assumption 3 requires a bounded diffusion schedule; the theoretical schedule $\sigma_t^2 = 2(1-t)/t$ violates the upper bound as $t \to 0$ (where $\sigma_t \to \infty$) and the lower bound as $t \to 1$ (where $\sigma_t \to 0$). We skip steering at $t = 0$ (Section~\ref{sec:method:framework}) to control the upper end, and--following standard practice~\citep{potaptchik2026mfm}--the formal result applies to the process stopped at $t = 1 - \varepsilon$ to control the lower end.
We note that the constant $C$ depends on the Lipschitz constants of the critic and MFM and may scale exponentially with the action dimension~\citep{potaptchik2026mfm}.
The gradient rescaling used in our practical algorithm (\cref{eq:grad_rescale}) replaces the $\sigma_t^2/2$ factor with a drift-magnitude-matching heuristic, which targets the same tilted distribution but falls outside the formal SDE analysis; we find it to be more stable empirically.

\section{Implementation Details}
\label[appendix]{app:implementation}

The per-step computational overhead of inference-time steering (Algorithm~\ref{alg:mfm-steer}) is a single forward-backward through $v_\theta$ and the critic for \qpilotu, and $K$ forward passes through the MFM and critic plus one backward pass for \qpilotm--all single-step evaluations with no inner rollouts.

\subsection{Architecture}
\label{app:implementation:arch}

In OGBench experiments, all networks are MLPs with hidden dimensions $(512, 512, 512, 512)$ and SiLU activations.
The base velocity field $v_\theta(x_t, t, s)$ is an \texttt{ActorVectorField} that concatenates $[s, x_t, \mathrm{FF}(t)]$ and outputs a velocity in $\R^{H \cdot A}$, where $\mathrm{FF}(\cdot)$ is a 64-dim Fourier-feature embedding and $H$ is the action-chunking horizon.
The critic is a $J{=}10$-ensemble of \texttt{Value} networks (LayerNorm enabled) with pessimism weight $\rho$ (Section~\ref{sec:method:framework}).
The Meta Flow Map $\hat{v}_{u,w}(\bar{x};\, t, x_t, s)$ is an \texttt{MFMFlowMap}: it concatenates $[s, \bar{x}, x_t, \mathrm{FF}(t), \mathrm{FF}(u), \mathrm{FF}(w)]$, with three independent Fourier embeddings for the outer time $t$ and the auxiliary times $(u, w)$.
LayerNorm is disabled inside the MFM and base velocity field; its conditioning-action input is passed through unchanged (no $t$-gating).
We instantiate Polyak-averaged target networks for the critic, the base flow, and the MFM.

In LIBERO experiments, the critic $Q(s,a)$ uses double-view $(64, 64)$ images from the global camera and the wrist camera as the pixel inputs. Its network consists of a 4-layer CNN encoder and MLPs with hidden dimensions $(128, 128, 128)$. We use the same 10-network Q-value ensemble as the critic in OGBench experiments. For baselines like DSRL or EXPO that require an additional Gaussian policy, we use the same structure as the critic and follow the hyperparameters used in OGBench.

\subsection{Training}
\label{app:implementation:training}

For OGBench, all parameters (base flow $v_\theta$, MFM $\hat{v}_\xi$, and critic ensemble $\{Q_{\phi_j}\}$) are jointly optimized with a single Adam optimizer at learning rate $3 \times 10^{-4}$ and global gradient clipping at norm $1.0$. No learning-rate schedule or warmup is used.
Targets are updated by Polyak averaging with $\tau_{\mathrm{critic}} = \tau_{\mathrm{flow}} = \tau_{\mathrm{mfm}} = 0.005$.
Each gradient step samples a length-$H$ sequence batch of size $256$ from the offline dataset (offline phase) or from the union of offline data and online replay (online phase, one gradient update per one environment interaction, no balanced sampling).
Cross-episode transitions are masked out via the \texttt{valid} mask of \citet{li2026qam}.

The total loss is
\[
  \mathcal{L} = \underbrace{\mathcal{L}_{\mathrm{critic}}}_{\text{TD}} + \underbrace{\mathcal{L}_{\mathrm{FM}}}_{\text{base flow, \cref{eq:fm_loss}}} + \lambda_{\mathrm{diag}} \mathcal{L}_{\mathrm{diag}} + \lambda_{\mathrm{cons}} \mathcal{L}_{\mathrm{cons}},
\]
with $\lambda_{\mathrm{diag}} = 1.0$, $\lambda_{\mathrm{cons}} = 0.5$ for \qpilotm and $\lambda_{\mathrm{diag}} = \lambda_{\mathrm{cons}} = 0$ for \qpilotu (no MFM is trained).
$\mathcal{L}_{\mathrm{critic}}$ uses TD against the pessimistic ensemble target with the next action sampled from the steered policy (\texttt{critic\_steered\_targets}).
$\mathcal{L}_{\mathrm{diag}}$ samples the outer time as $t = u^2,\, u \sim \mathcal{U}[0,1]$ to bias supervision toward small $t$ where the posterior is broader (\texttt{mfm\_t\_outer\_power}=$2$).
$\mathcal{L}_{\mathrm{cons}}$ samples ordered times $0 \leq u < m < w \leq 1$ by sorting two uniforms and taking $m = u + \gamma(w - u)$ with $\gamma \sim \mathcal{U}[0,1]$, and supervises the direct map against the composed map computed under target parameters.
Both MFM losses use the adaptive reweighting of \citet{potaptchik2026mfm} (\texttt{adaptive\_loss}=true) with $p_{\mathrm{diag}} = 0.5$, $p_{\mathrm{cons}} = 1.0$, $c = 0.01$.
We do not use the FT auxiliary loss in our experiments ($\lambda_{\mathrm{ft}} = 0$).

\paragraph{LIBERO MFM Distillation.}
For LIBERO, the auxiliary meta flow map is pretrained offline against the frozen $\pi_{0.5}$-LIBERO checkpoint and the critic is then trained online (the base flow is never updated). The MFM $\hat{v}_{u,w}$ uses a transformer-style architecture matching the $\pi_{0.5}$ action expert: hidden width $512$, $8$ blocks, and $8$ attention heads (\texttt{mfm\_width}=$512$, \texttt{mfm\_num\_blocks}=$8$, \texttt{mfm\_num\_heads}=$8$). Following \citet{potaptchik2026mfm}, the MFM is initialized from the $\pi_{0.5}$-LIBERO action-expert weights with the auxiliary-time gating zero-initialized, so that at the start of distillation the network reproduces the frozen base flow exactly and only learns the off-diagonal posterior structure.
Distillation follows the GLASS reparameterization of \citet{potaptchik2026mfm}: the frozen $\pi_{0.5}$ acts as the conditional teacher, supplying noisy-state/clean-action pairs from LIBERO-90 demonstrations, and the MFM is trained with the same diagonal and consistency objectives ($\lambda_{\mathrm{diag}} = 1.0$, $\lambda_{\mathrm{cons}} = 0.5$, adaptive reweighting on) and the same outer-time power-2 sampling as in OGBench.
We use batch size $256$ for $4 \times 10^4$ gradient steps, optimized with RAdam at learning rate $1 \times 10^{-4}$ with a linear warmup over $2000$ steps and a constant schedule thereafter, and global gradient clipping at norm $1.5$. The MFM target is updated by Polyak averaging with $\tau_{\mathrm{mfm}} = 10^{-4}$ (matching \citet{potaptchik2026mfm}'s reference value, which is more conservative than the OGBench setting since the MFM is updated against a frozen teacher).
The critic uses a separate Adam optimizer at $3\times10^{-4}$ with a $500$-step warmup and cosine decay over $3 \times 10^5$ steps to $0.1\times$ peak, with global gradient clipping at norm $1.0$ and Polyak rate $\tau_{\mathrm{critic}} = 0.005$.

\Cref{alg:qpilots-train} summarizes the resulting offline-to-online procedure for the OGBench experiments. \qpilotu uses the same outer loop with $\lambda_{\mathrm{diag}} = \lambda_{\mathrm{cons}} = 0$ and no MFM network, in which case the steered policy in \cref{alg:mfm-steer} replaces $\widehat{\nabla V}_t^{\mathrm{MFM}}$ with the Tweedie estimator $\widehat{\nabla V}_t^{\mathrm{UG}}$ of \cref{eq:ug_grad}.

\begin{algorithm}[t]
\caption{\qpilotm: Offline-to-Online Training (OGBench)}
\label{alg:qpilots-train}
\begin{algorithmic}[1]
\REQUIRE Offline data $\mathcal{D}_{\mathrm{off}}$, environment, offline steps $M_{\mathrm{off}}$, online env steps $M_{\mathrm{on}}$, batch size $B$, target rate $\tau$, MFM weights $\lambda_{\mathrm{diag}},\lambda_{\mathrm{cons}}$, steering coefficient $\alpha$, action-chunk horizon $H$
\STATE Initialize base flow $v_\theta$, critic ensemble $\{Q_{\phi_j}\}_{j=1}^{J}$, MFM $\hat{v}_\xi$ (\qpilotm only), and target copies $(\theta', \phi', \xi')$;\quad replay $\mathcal{R} \leftarrow \emptyset$
\STATE \textbf{// Offline phase}
\FOR{$k = 1, \ldots, M_{\mathrm{off}}$}
    \STATE Sample length-$H$ sequence batch $\mathcal{B} \sim \mathcal{D}_{\mathrm{off}}$ of size $B$;\quad call \textsc{Update}$(\mathcal{B})$
\ENDFOR
\STATE \textbf{// Online phase (one gradient step per environment step)}
\STATE Reset environment: $s \leftarrow s_0$
\FOR{$k = 1, \ldots, M_{\mathrm{on}}$}
    \STATE Sample chunk $a_{1:H} \sim \pi^{\alpha}(\cdot \mid s)$ via \cref{alg:mfm-steer} \COMMENT{Steered rollout}
    \STATE Execute $a_{1:H}$ open-loop; push transitions to $\mathcal{R}$; advance $s$ (reset on terminal)
    \STATE Sample $\mathcal{B} \sim \mathcal{D}_{\mathrm{off}} \cup \mathcal{R}$ of size $B$;\quad call \textsc{Update}$(\mathcal{B})$
\ENDFOR
\STATE \textbf{return} $v_\theta$, $\{Q_{\phi_j}\}_{j=1}^{J}$, $\hat{v}_\xi$
\STATE
\STATE \textbf{procedure} \textsc{Update}$(\mathcal{B})$
\STATE \quad For each $(s, a, r, s') \in \mathcal{B}$: sample $a' \sim \pi^{\alpha}(\cdot \mid s')$ via \cref{alg:mfm-steer} using target nets
\STATE \quad TD target: $y = r + \gamma\bigl(\tfrac{1}{J}\sum_j Q_{\phi'_j}(s', a') - \rho\,\mathrm{std}_j\, Q_{\phi'_j}(s', a')\bigr)$
\STATE \quad $\mathcal{L}_{\mathrm{critic}} = \tfrac{1}{J}\sum_j \bigl(Q_{\phi_j}(s, a) - \mathrm{sg}(y)\bigr)^2$ \COMMENT{Pessimistic ensemble TD}
\STATE \quad $\mathcal{L}_{\mathrm{FM}}$ from \cref{eq:fm_loss};\quad $\mathcal{L}_{\mathrm{diag}}, \mathcal{L}_{\mathrm{cons}}$ from \cref{eq:diag_loss,eq:consistency_loss}
\STATE \quad $\mathcal{L} = \mathcal{L}_{\mathrm{critic}} + \mathcal{L}_{\mathrm{FM}} + \lambda_{\mathrm{diag}}\mathcal{L}_{\mathrm{diag}} + \lambda_{\mathrm{cons}}\mathcal{L}_{\mathrm{cons}}$
\STATE \quad Joint Adam step on $(\theta, \{\phi_j\}, \xi)$ with global gradient clipping at norm $1.0$
\STATE \quad Polyak: $\theta' \leftarrow (1-\tau)\theta' + \tau\theta$ (similarly for $\phi'_j$ and $\xi'$)
\STATE \textbf{end procedure}
\end{algorithmic}
\end{algorithm}

\subsection{Inference}
\label{app:implementation:inference}

Action generation uses $K{=}10$ Euler steps and best-of-$1$ selection (\texttt{flow\_steps}=$10$, \texttt{best\_of\_n}=$1$).
For \qpilotm we draw $N{=}8$ posterior samples per Euler step (\texttt{num\_mfm\_samples}=$8$) in OGBench and $N{=}4$ posterior samples per Euler step (\texttt{num\_mfm\_samples}=$4$) in LIBERO.
Both \qpilotm and \qpilotu use gradient rescaling (\cref{eq:grad_rescale}) rather than the $\sigma_t^2/2$ schedule, with steering skipped at $t = 0$.
For \qpilotm the steering gradient is computed against the online MFM and the target critic; for \qpilotu the gradient is computed against the target base flow and the target critic, which we found to be more stable for the biased Tweedie estimate.
At inference each $H$-action chunk is executed open-loop in the environment.

\subsection{Per-Domain Settings}
\label{app:implementation:hparams}

\paragraph{OGBench Per-Task Settings and Results.} Domain-level hyperparameters follow \citet{li2026qam} and are summarized in Table~\ref{tab:domain_hparams}.
The steering coefficient $\alpha$ was selected per domain by sweeping $\{0.1, 0.2, 0.3, 0.5, 1.0, 2.0\}$ on two tuning tasks per domain (tasks 1 and 4 for navigation, tasks 2 and 4 for manipulation) with 4 seeds per configuration, mirroring the protocol of \citet{li2026qam}. The combined performance across the two tuning tasks selects $\alpha$ for the domain, which is then reused across all five tasks and evaluated on 8 fresh seeds. Per-domain sweep curves are shown in \cref{fig:alpha_sweep}.
The selected coefficients are: \texttt{antmaze-large}: $0.2$, \texttt{antmaze-giant}: $0.3$, \texttt{humanoidmaze-medium}: $0.2$, \texttt{humanoidmaze-large}: $0.2$, \texttt{cube-double}: $0.3$, \texttt{cube-triple}: $0.5$, \texttt{cube-quadruple}: $0.5$, \texttt{scene-play-sparse}: $0.2$, \texttt{puzzle-3x3-play-sparse}: $0.3$, \texttt{puzzle-4x4-play-sparse}: $2.0$. \Cref{fig:ogbench_all_tasks} reports per-task success rate across all 50 OGBench tasks for every method evaluated in~\cref{sec:exp:ogbench}.

\begin{table}[h]
\centering
\small
\setlength{\tabcolsep}{6pt}
\renewcommand{\arraystretch}{1.1}
\caption{Per-domain hyperparameters: action-chunking horizon $H$, discount $\gamma$, critic pessimism $\rho$, and tuned steering coefficient $\alpha$. All other hyperparameters (network sizes, optimizer, batch size, $K$, $N$, $\tau$, $\lambda_{\mathrm{diag}}$, $\lambda_{\mathrm{cons}}$) are shared across domains.}
\label{tab:domain_hparams}
\begin{tabular}{lcccc}
\toprule
Domain & $H$ & $\gamma$ & $\rho$ & $\alpha$ \\
\midrule
\texttt{antmaze-large-navigate}        & 1 & 0.99  & 0.5 & 0.2 \\
\texttt{antmaze-giant-navigate}        & 1 & 0.995 & 0.5 & 0.3 \\
\texttt{humanoidmaze-medium-navigate}  & 1 & 0.995 & 0.0 & 0.2 \\
\texttt{humanoidmaze-large-navigate}   & 1 & 0.995 & 0.0 & 0.2 \\
\texttt{cube-double-play}              & 5 & 0.99  & 0.5 & 0.3 \\
\texttt{cube-triple-play}              & 5 & 0.99  & 0.5 & 0.5 \\
\texttt{cube-quadruple-play}           & 5 & 0.99  & 0.5 & 0.5 \\
\texttt{scene-play-sparse}             & 5 & 0.99  & 0.5 & 0.2 \\
\texttt{puzzle-3x3-play-sparse}        & 5 & 0.99  & 0.5 & 0.3 \\
\texttt{puzzle-4x4-play-sparse}        & 5 & 0.99  & 0.5 & 2.0 \\
\bottomrule
\end{tabular}
\end{table}

\paragraph{LIBERO Per-Task Settings and Results.}
\label{app:implementation:hparams:libero}
\Cref{tab:libero_hparams} lists the six evaluation tasks, the natural-language goal description supplied to $\pi_{0.5}$, the zero-shot $\pi_{0.5}$-LIBERO success rate, and the per-task selected steering coefficient $\alpha$ (used by \qpilotu, \qpilotm, and CGQL-LinEx) and DSRL action magnitude $b_W$. \Cref{tab:libero_results} reports end-of-training success rate per task. The critic architecture is described in \cref{app:implementation:arch}; the LIBERO setting is pure online RL with no offline phase, BoN uses $N=5$, and reported success rates are EMA-smoothed (time-weighted, $\alpha=0.99$) along the training axis before averaging across seeds. The action-chunking horizon is $H=10$, matching $\pi_{0.5}$; the discount is $\gamma=0.99$ and critic pessimism is $\rho=0.5$ across all 6 tasks.

\begin{table}[h]
\centering
\small
\setlength{\tabcolsep}{5pt}
\renewcommand{\arraystretch}{1.1}
\caption{\textbf{LIBERO end-of-training success rate ($\%$).} Mean over $5$ seeds at $5\times10^5$ steps, EMA-smoothed (time-weighted, $\alpha=0.99$) along the training axis prior to averaging. Bold marks the best mean per column; differences within bootstrap CIs occur on tasks 26 and 60 between \qpilotu and DSRL.}
\label{tab:libero_results}
\begin{tabular}{lccccccc}
\toprule
Method & Task 26 & Task 31 & Task 38 & Task 59 & Task 60 & Task 64 & Mean \\
\midrule
$\pi_{0.5}$ (zero-shot) & 6 & 14 & 32 & 19 & 50 & 36 & 26 \\
\midrule
DSRL        & 36 & 22 & 70 & \textbf{93} & 95 & 34 & 58 \\
BoN ($N=5$) & 10 & 19 & 53 & 35 & 66 & 57 & 40 \\
EXPO        &  7 & 29 & 55 & 28 & 62 & 52 & 39 \\
CGQL-LinEx  & 11 & 12 & 25 & 27 & 64 & 49 & 31 \\
\midrule
\qpilotu      & \textbf{43} & \textbf{55} & \textbf{91} & 82 & \textbf{99} & \textbf{89} & \textbf{76} \\
\qpilotm      & 41 & 42 & 78 & 87 & 98 & 86 & 72 \\
\bottomrule
\end{tabular}
\end{table}

\begin{table}[h]
\centering
\small
\setlength{\tabcolsep}{4pt}
\renewcommand{\arraystretch}{1.1}
\caption{LIBERO-90 evaluation tasks. Zero-shot SR is the success rate of the frozen $\pi_{0.5}$-LIBERO checkpoint without any online learning. $\alpha$ is the steering coefficient used by \qpilotu, \qpilotm, and CGQL-LinEx; $b_W$ is DSRL's action magnitude.}
\label{tab:libero_hparams}
\begin{tabular}{cp{3.0in}ccc}
\toprule
Task & Description & Zero-shot SR (\%) & $\alpha$ & $b_W$ \\
\midrule
26 & put the wine bottle in the bottom drawer of the cabinet  & 5.9  & 0.3 & 2 \\
31 & put the black bowl on top of the cabinet                  & 13.8 & 0.3 & 3 \\
38 & put the right moka pot on the stove                       & 31.5 & 0.1 & 1 \\
59 & pick up the tomato sauce and put it in the tray           & 19.2 & 0.3 & 2 \\
60 & pick up the black bowl on the left and put it in the tray & 49.5 & 0.2 & 2 \\
64 & stack the right bowl on the left bowl and place them in the tray & 35.9 & 0.2 & 1 \\
\bottomrule
\end{tabular}
\end{table}

\paragraph{Compute.}
OGBench experiments use TPU v4-8 hosts. \texttt{cube-quadruple-play} and \texttt{puzzle-4x4-play-sparse} additionally use the OGBench 100M datasets.
A full $1.5 \times 10^6$-step run (offline + online) for one task and four seeds in parallel takes approximately 8 hours for \qpilotu and 12 hours for \qpilotm, depending on the task (different domains differ widely in evaluation time requirements). In online RL on LIBERO tasks, $5 \times 10^5$ takes about 6 hours for \qpilotu and 8 hours for \qpilotm.
Offline MFM distillation on LIBERO ($4 \times 10^4$ gradient steps, batch size $256$) takes approximately two days on a single NVIDIA H100 due to the cost of running the frozen $3$B-parameter $\pi_{0.5}$ teacher in every batch. This cost is paid once as we use the same MFM for all tasks.

\section{Additional Experiments}
\label{app:experiments}

\subsection{Latency Analysis}

We conduct an inference-time latency analysis per time step using $\pi_{0.5}$ model. Given that $\pi_{0.5}$ is a 3B-parameter model, it inherently incurs a relatively high inference cost. When employing the Best-of-$N$, the repeated inference cycles lead to a surge in latency, making real-time control challenging. As shown in~\cref{tab:inference_costs}, we perform single-step inference on task 59 across methods running on a single NVIDIA RTX PRO 6000. Using $N=20$ in Best-of-$N$ results in $136.9\%$ performance improvement with 1.44 times slower inference, but \qpilotu gets $+333.8\%$ with only $1.04$ times overhead, even faster than Best-of-5. Furthermore, while \qpilotm achieves the best performance on this task, it still struggles with the high latency due to the computational cost of multiple posterior estimations.

\begin{table}[h]
\centering
\small
\setlength{\tabcolsep}{8pt}
\renewcommand{\arraystretch}{1.2}
\caption{Comparison of success rate after training 500k update steps, inference time, and computational overhead. For the $\pi_{0.5}$ model using Best-of-$N$, the inference time escalates with $N$.}
\label{tab:inference_costs}
\begin{tabular}{lcccc}
\toprule
Method & $N$ & Success rate ($\%$) & Inference-time & Overhead \\
\midrule
$\pi_{0.5}$ & -  & 19.2 & 112.7ms & 1.0$\times$ \\
\multirow{3}{*}{Best-of-$N$} 
& 5  & 36.4 (+$89.5\%$) & 127.2ms & 1.14$\times$ \\
& 10 & 41.4 (+$115.6\%$) & 137.0ms & 1.22$\times$ \\
& 20 & 45.5 (+$136.9\%$) & 158.0ms & 1.44$\times$ \\
\qpilotu & - & 83.3 (+$333.8\%$) & \textbf{114.1ms} & \textbf{1.04$\times$} \\
\qpilotm & - & \textbf{88.5} (+$360.9\%$) & 240.1ms & 2.14$\times$ \\
\bottomrule
\end{tabular}
\end{table}

\begin{figure}[t]
\centering
\makebox[\textwidth][c]{\includegraphics[width=0.7\paperwidth-1in\relax,keepaspectratio]{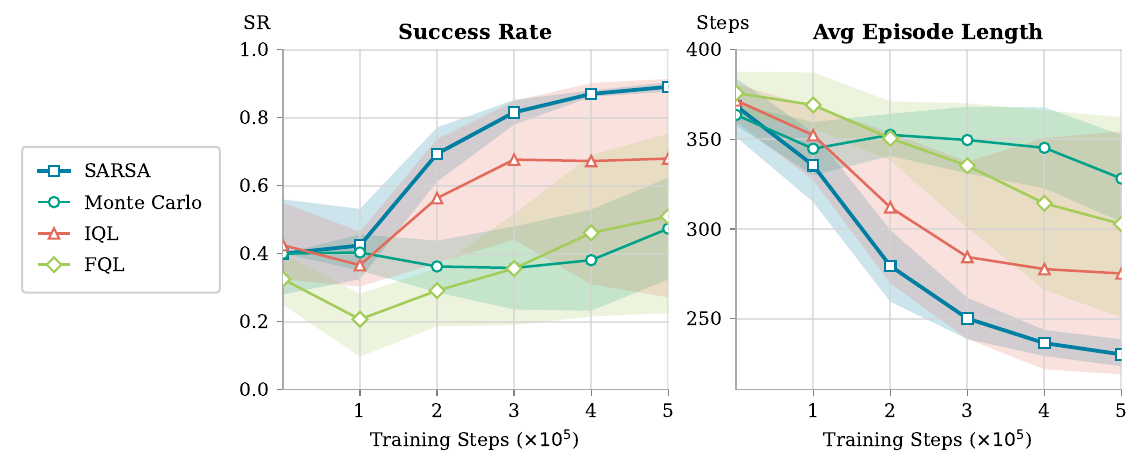}}
\caption{\textbf{Value Learning Method Ablation.} We compare SARSA with IQL, FQL, and Monte-Carlo on the LIBERO-90 task 64. It shows means over 5 seeds and shaded bands show seed-level $95\%$ bootstrap CIs. A lower average episode length indicates that the method can complete the task faster. }
\label{fig:value_ablation}
\end{figure}

\subsection{Value Learning Ablation}
\label{app:value_ablation}
VLA's slow inference makes standard Bellman target-based TD-learning almost infeasible, as each update requires querying the policy "batch\_size" times. To find an alternative, we compare 4 value-learning objectives for the critic on LIBERO-90 task 64: SARSA (we choose for \qpilots), Monte Carlo (MC) returns, Implicit Q-Learning (IQL)~\citep{kostrikov2022iql}, and Flow Q-Learning (FQL)~\citep{park2025fql}. Note that FQL is not used for policy learning. Instead, we still use the steered $\pi_{0.5}$ for inference, but distill the steered $\pi_{0.5}$ policy into a one-step policy for Q-learning next action query. 

As shown in~\cref{fig:value_ablation}, SARSA reaches a success rate of $90\%$ at $5\times10^5$ training steps, outperforming the other 3 approaches. Despite being unbiased, the MC estimator regresses directly against discounted episode returns, which are dominated by large negative rewards in early training, leading to slower initial learning across seeds. IQL and FQL are originally designed for offline RL. However, under the non-stationary replay buffer of our online setting, both methods exhibit instability (high variance), suggesting that current offline value learning approaches still struggle with the continually shifting action distribution. In contrast, SARSA is shown as a pragmatic and effective choice for online critic training under a generalist policy.

\subsection{Sensitivity Analysis}
\label{app:alpha_sweep}

\cref{fig:alpha_sweep} reports the per-domain steering-coefficient sweep used to select $\alpha$ in \cref{app:implementation:hparams}. For each OGBench domain, we run \qpilotu on the two tuning tasks (tasks 1 and 4 for navigation, tasks 2 and 4 for manipulation) over $\alpha \in \{0.1, 0.2, 0.3, 0.5, 1.0, 2.0\}$ with 4 seeds per configuration, and pick the $\alpha$ with the strongest combined performance for that domain. Most domains peak between $0.1$ and $0.5$ and degrade gracefully on either side, indicating low sensitivity inside this range. The exception is \texttt{puzzle-4x4-play-sparse}, where success is essentially flat below $1.0$ and only emerges at $\alpha = 2.0$; we therefore extend the sweep to $2.0$ on that domain only.

\begin{figure}[p]
\vspace*{-0.5in}
\centering
\makebox[\textwidth][c]{\includegraphics[width=\dimexpr\paperwidth-1in\relax,keepaspectratio]{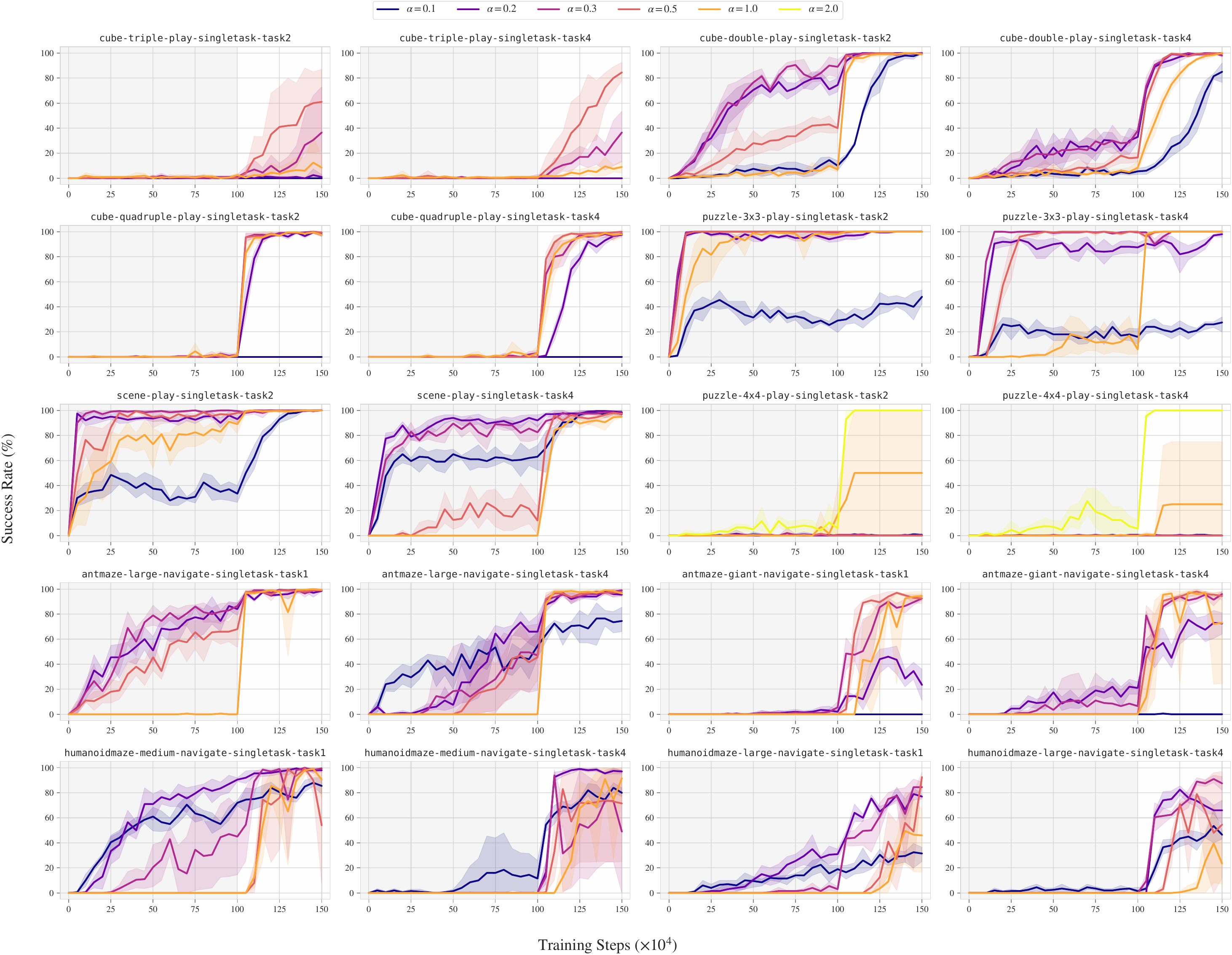}}
\caption{\textbf{Steering Coefficient $\alpha$ Sweep.} Eval success on the two tuning tasks of each OGBench domain across $\alpha \in \{0.1, 0.2, 0.3, 0.5, 1.0, 2.0\}$. Curves show seed-bootstrap mean and $95\%$ CI. We only sweep $2.0$ on \texttt{puzzle-4x4} since performance on others peaked well before $1.0$.}
\label{fig:alpha_sweep}
\end{figure}
\clearpage

\begin{figure}[p]
\vspace*{-0.5in}
\centering
\makebox[\textwidth][c]{\includegraphics[width=\dimexpr\paperwidth-1in\relax,keepaspectratio]{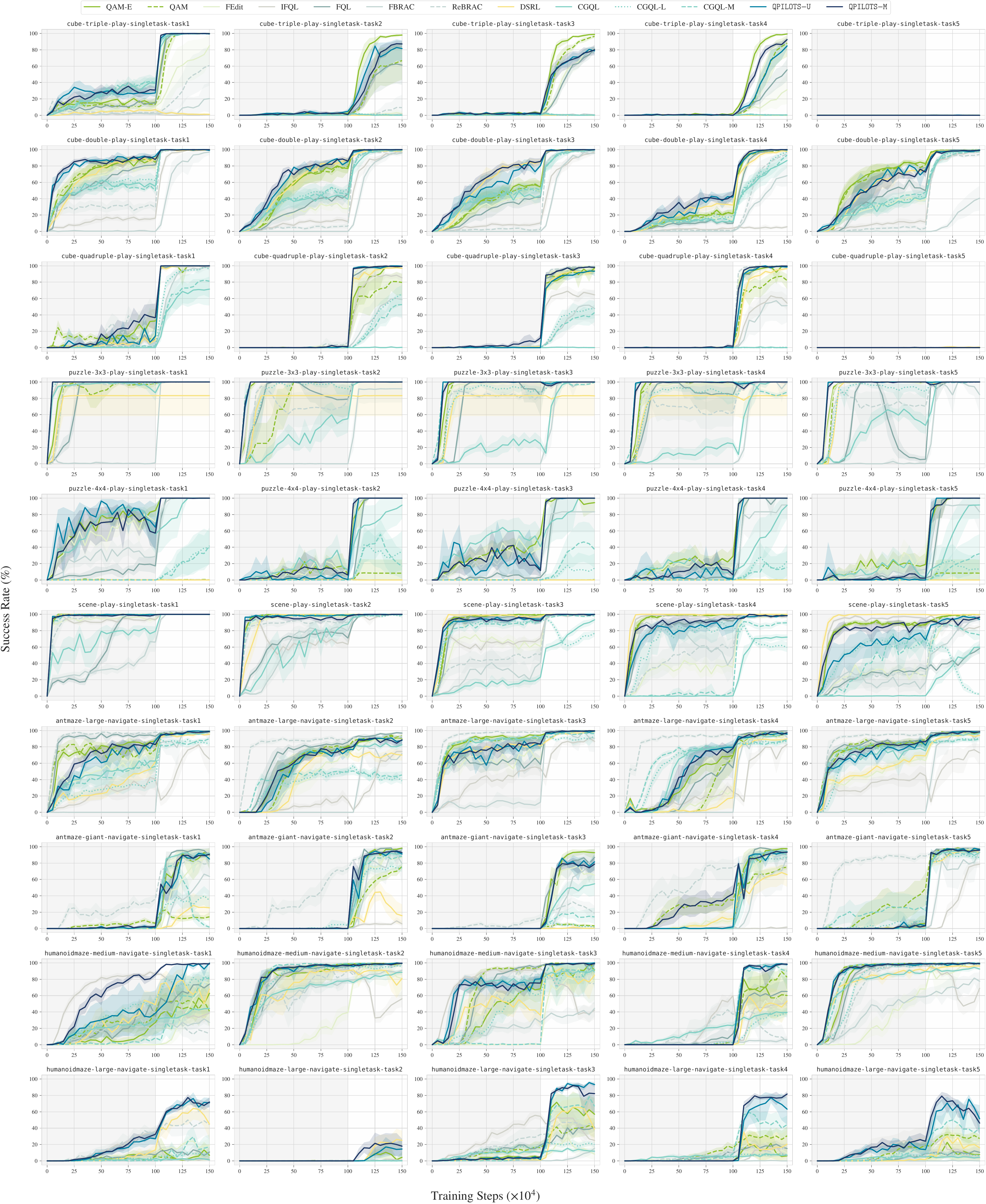}}
\caption{\textbf{OGBench: Per-Task Success Rate Across All 50 Tasks.}}
\label{fig:ogbench_all_tasks}
\end{figure}

\begin{figure}[h]
\centering
\includegraphics[width=5.5in]{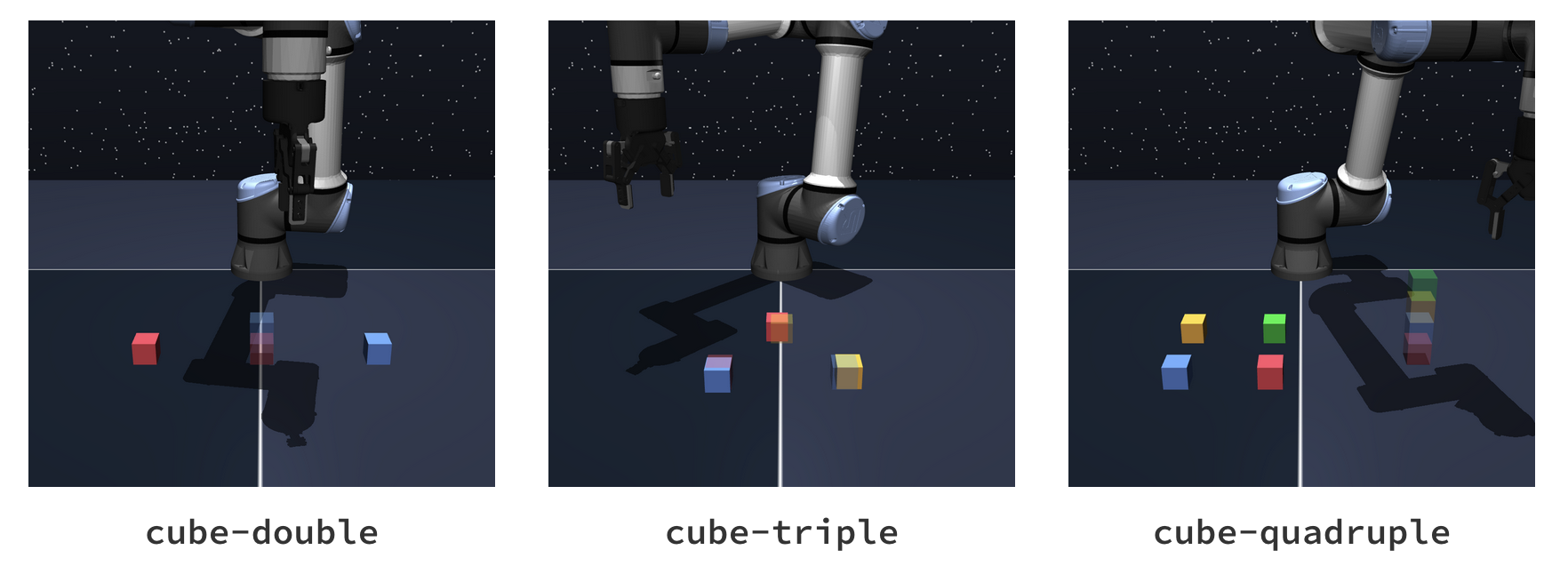}
\includegraphics[width=5.5in]{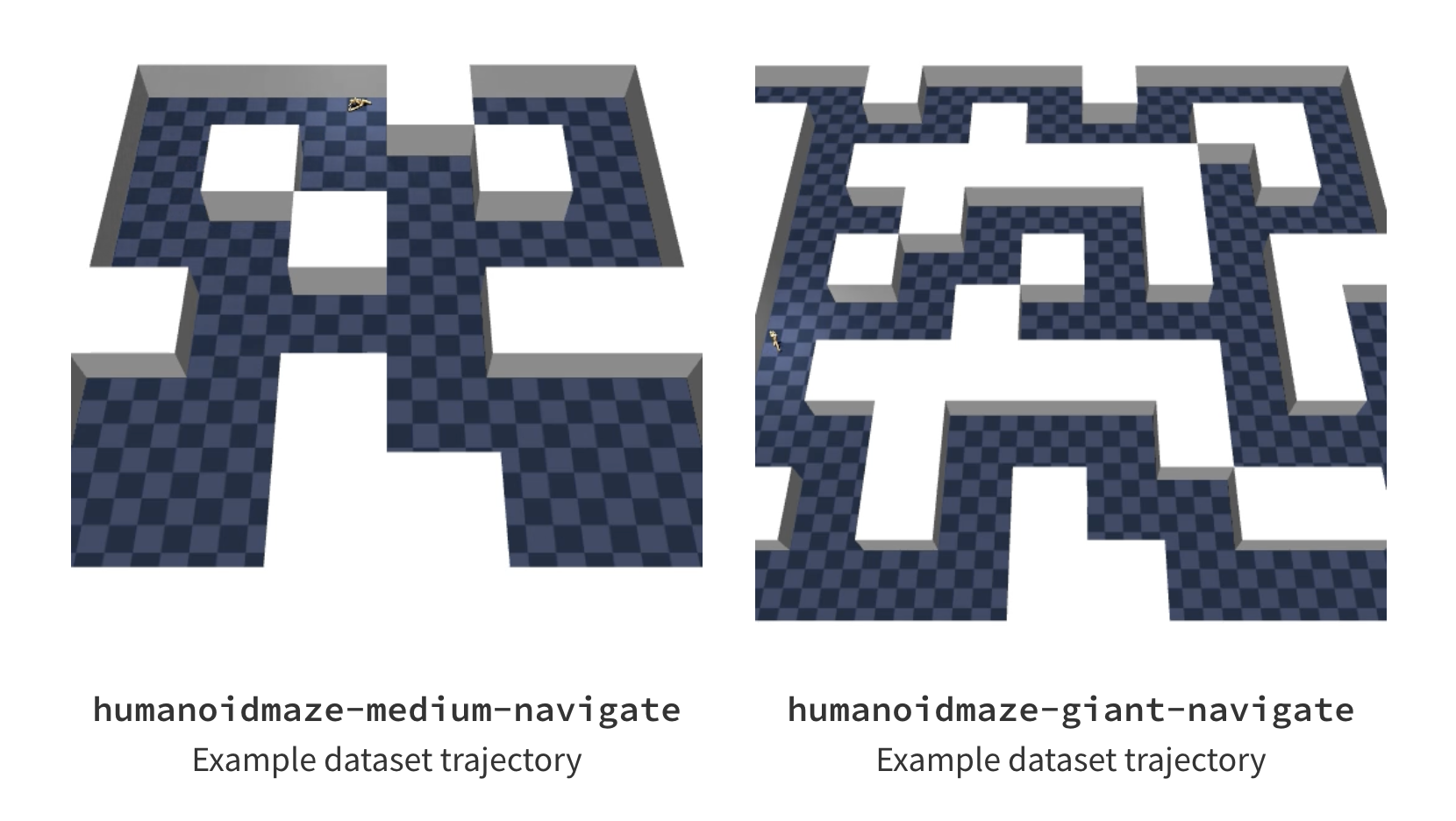}
\caption{\textbf{OGBench Domains.}
\textbf{Top:} \texttt{cube} tasks involve controlling a robot arm to achieve 5 different goal configurations per domain.
\textbf{Bottom:} \texttt{humanoid} tasks involve controlling a 21-DoF humanoid to navigate to different goal locations, and is a particularly challenging long-horizon domain within the task-suite. \cite{park2025ogbench}.}
\label{fig:banner}
\end{figure}

\begin{figure}[h]
\begin{center}
\includegraphics[width=2.24in]{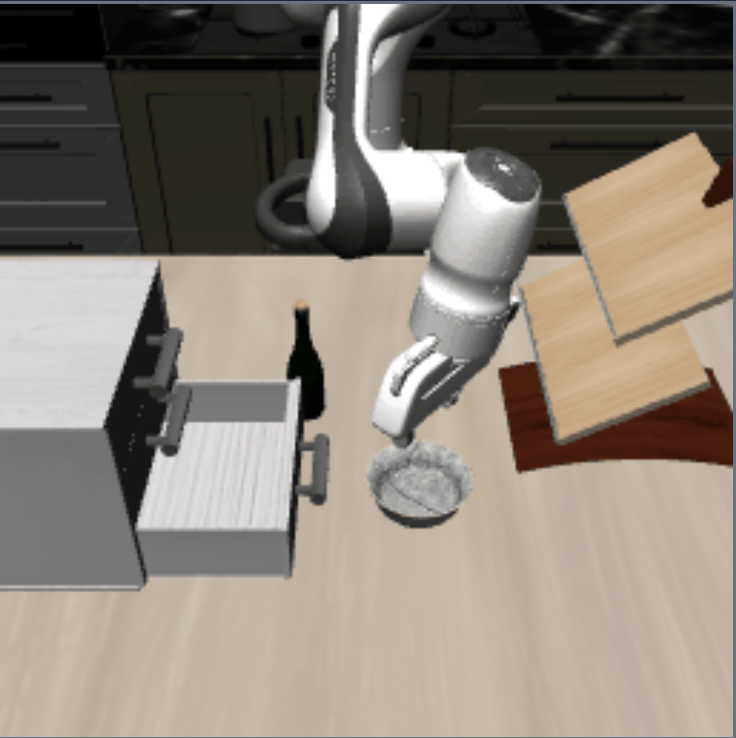}
\includegraphics[width=2.24in]{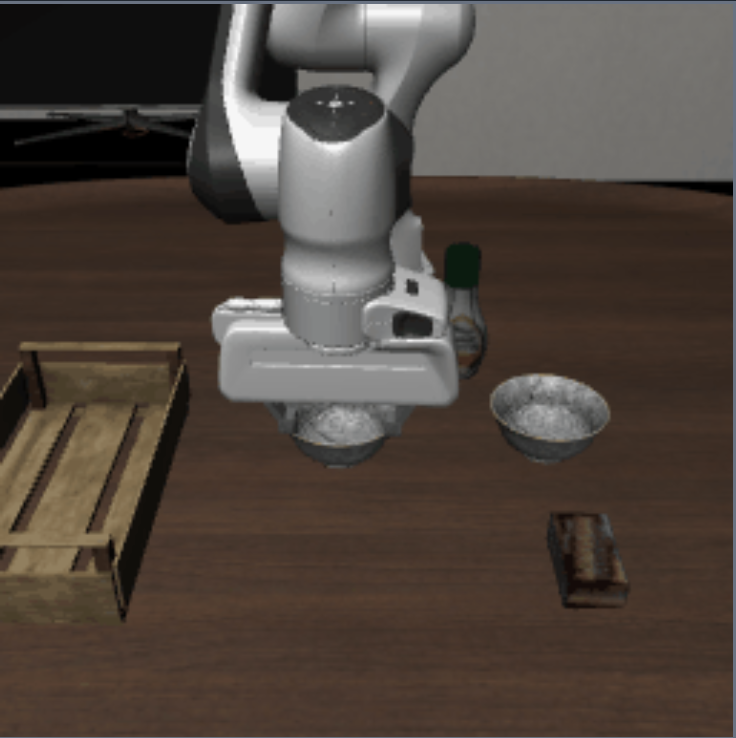}
\includegraphics[width=2.24in]{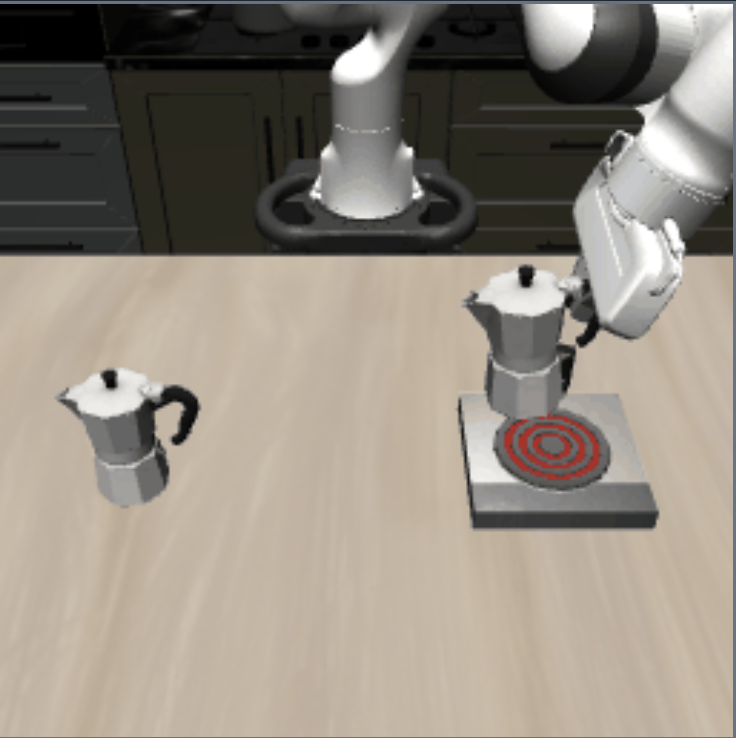}
\includegraphics[width=2.24in]{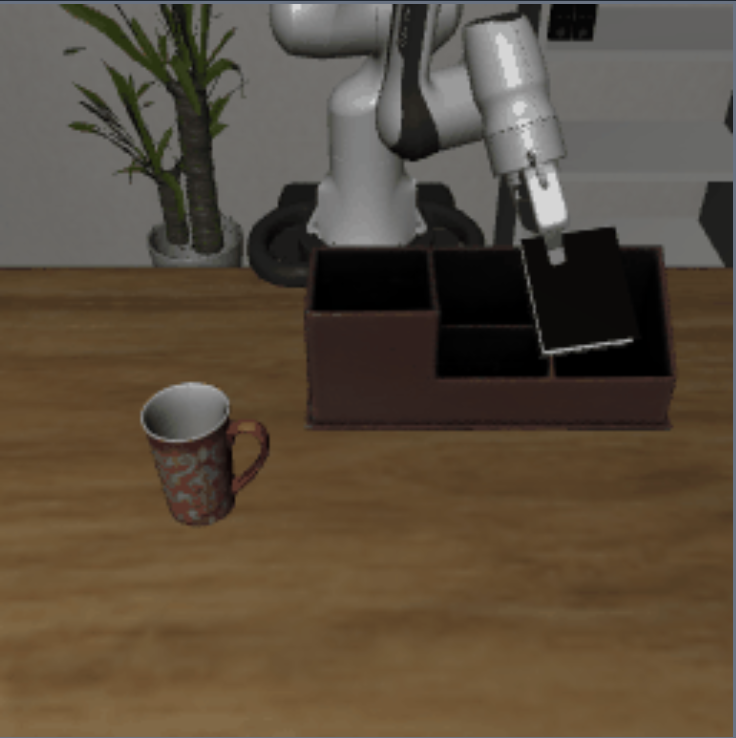} 
\end{center}
\caption{\textbf{LIBERO Tasks.} The LIBERO simulation task suite involves controlling a 7-DoF Franka arm to manipulate various household objects. Tasks vary in difficulty and length.
\cite{liu2023libero}.}
\label{fig:banner}
\end{figure}

\clearpage

\newpage

\end{document}